\DeclareMathOperator{\bJ}{\boldsymbol{J}}
\DeclareMathOperator{\bx}{\boldsymbol{x}}
\DeclareMathOperator{\bz}{\boldsymbol{z}}
\DeclareMathOperator{\btheta}{\boldsymbol{\theta}}
\DeclareMathOperator{\R}{\mathbb{R}}
\DeclareMathOperator{\E}{\mathbb{E}}
\DeclareMathOperator{\bbf}{\boldsymbol{f}}
\DeclareMathOperator{\bbv}{\boldsymbol{v}}
\DeclareMathOperator{\bc}{\boldsymbol{c}}
\DeclareMathOperator{\br}{\boldsymbol{r}}
\DeclareMathOperator{\bg}{\boldsymbol{g}}
\DeclareMathOperator{\bG}{\boldsymbol{G}}
\DeclareMathOperator{\bX}{\boldsymbol{X}}
\DeclareMathOperator{\bZ}{\boldsymbol{Z}}
\DeclareMathOperator{\ba}{\boldsymbol{a}}
\DeclareMathOperator{\bh}{\boldsymbol{h}}
\DeclareMathOperator{\M}{\mathcal{M}}
\DeclareMathOperator{\bbc}{\boldsymbol{c}}
\DeclareMathOperator{\be}{\boldsymbol{e}}
\DeclareMathOperator{\bU}{\boldsymbol{U}}
\DeclareMathOperator{\bu}{\boldsymbol{u}}
\DeclareMathOperator{\W}{\mathcal{W}}
\DeclareMathOperator{\bS}{\boldsymbol{S}}
\DeclareMathOperator{\bmu}{\boldsymbol{\mu}}
\DeclareMathOperator{\bA}{\boldsymbol{A}}
\DeclarePairedDelimiter{\norm}{\lVert}{\rVert}
\newcommand{\singlem}{\textsc{Single}-$\M$}
\newcommand{\single}{\textsc{Single}}
\newcommand{\multdirect}{\textsc{Mult-Direct}}
\newcommand{\mult}{\textsc{Mult}}
\newtheorem{theorem}{Theorem}
\begin{document}

\runningauthor{Hanlin Yu, Søren Hauberg, Marcelo Hartmann, Arto Klami, Georgios Arvanitidis}

\twocolumn[

\aistatstitle{Learning Geometry and Topology via Multi-Chart Flows}

\aistatsauthor{ Hanlin Yu \And Søren Hauberg \And Marcelo Hartmann }

\aistatsaddress{ University of Helsinki \And Technical University of Denmark \And University of Helsinki }

\aistatsauthor{ Arto Klami \And Georgios Arvanitidis}

\aistatsaddress{ University of Helsinki \And Technical University of Denmark } ]

\begin{abstract}
  Real world data often lie on low-dimensional Riemannian manifolds embedded in high-dimensional spaces. This motivates learning degenerate normalizing flows that map between the ambient space and a low-dimensional latent space. However, if the manifold has a non-trivial topology, it can never be correctly learned using a single flow. Instead multiple flows must be `glued together'. In this paper, we first propose the general training scheme for learning such a collection of flows, and secondly we develop the first numerical algorithms for computing geodesics on such manifolds. Empirically, we demonstrate that this leads to highly significant improvements in topology estimation.
\end{abstract}

\section{INTRODUCTION}

Normalizing flows \citep{Papamakarios2021flows} are a family of flexible neural network models of complex probability distributions from finite observations. They are bijections that allow tractable sampling and log-density evaluations. Due to the bijectivity requirement, the dimensionalities of the latent and ambient spaces need to be identical. As real-world datasets often lie on a low-dimensional Riemannian manifold embedded in a high-dimensional ambient space, we usually do not expect a bijection. As such, the modeling strategy of the flows can be ill-posed. When the manifold structure is known beforehand, one can construct specific flows that adapt to it \citep{rezende20flows_tori_spheres,ng2023mixture_flows}.

\begin{figure}[h]
\centering
\includegraphics[width=0.3\textwidth]{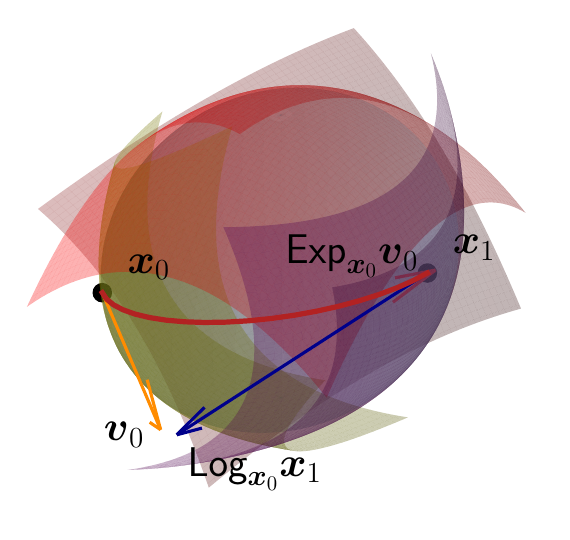}
\vspace{-7mm}
\caption{Using multiple charts to cover a manifold gives the ability to learn the manifold's true geometry and topology. The charts may not perfectly align, requesting careful treatments.}
\label{fig:teaser}
\end{figure}

A probability density on an unknown Riemannian manifold can be modeled using a degenerate normalizing flow \citep{Brehmer2020mflows,Caterini2021rectangular}, by learning a mapping between a low-dimensional latent space and a high-dimensional ambient space. It was demonstrated that, for certain manifolds, the flows are able to model the distribution accurately. However, these methods often consider a single normalizing flow, which is justified only when the manifold is diffeomorphic to a Euclidean space. Some common manifolds, for instance the sphere and the torus, do not satisfy this assumption. It is by definition impossible for a single-chart flow to model these manifolds correctly due to the topological mismatch.

Understanding the underlying topology and geometry of the data manifold using only finite observations is an open problem in machine learning \citep{hauberg2019only,moor2020topo_ae,acosta2023quantifying,ross2024implicit_manifold,diepeveen2024pulling}. Some attempts focus on capturing the Riemannian structure using the pullback metric induced by a single normalizing flow \citep{hoffman2019neutra,kruiff2024pullback}, while other works \citep{schonsheck2020chart,Kalatzis2022multi,sidheekh2022vq-flows,alberti2024manifold} use multiple autoencoders or normalizing flows to cover the manifold. However, these approaches often lack a principled way of choosing the correct encoder \citep{loaiza-ganem2024deep}. Most importantly, these works did not study the geometric and topological implications of the manifold represented by the generative model.

\textbf{In this paper}, we aim at learning data manifolds with the correct geometry and topology. 
Classic differential geometry achieves this by
gluing together local ``patches''
of the manifold, each of which is a diffeomorphism. Similarly, we learn a mixture of flows, each corresponding to a ``patch''. In classic differential geometry, patches are assumed to overlap perfectly in bordering regions, but such theoretical constructions are unrealistic in practice using flows (see Figure~\ref{fig:teaser}). We propose a formalism to handle such non-overlapping patches, and further develop algorithms for computing geodesics over manifolds defined through such mixture of flows. Our contributions are:

1. We demonstrate that using a single normalizing flow to model a distribution on a Riemannian manifold can lead to significant misunderstanding in terms of the underlying topology and geometry.

2. We provide general training strategies for multi-chart flows based on the maximum likelihood principle and demonstrate that they can perform better than single-chart flows of similar complexities. While both gluing together multiple degenerate normalizing flows \citep{Kalatzis2022multi,sidheekh2022vq-flows} and training a mixture of flows using Expectation Maximization (EM) \citep{ng2023mixture_flows} have been explored before, to the best of our knowledge, our work is the first to formulate the Maximum Likelihood Estimation / EM algorithm for directly training such a mixture of degenerate flows which also provides a principled way to choose the flows through the responsibilities.

3. While previous works generally focus on modeling and density estimation and \textbf{did not} study how to compute the exponential maps and logarithmic maps, we develop geodesic algorithms on the learned manifold. We provide specialized tools to utilize the geometry learned by multi-chart flows that crucially respect the actual topology of the underlying manifold, taking into account that the individual flows may not strictly agree with each other, yielding more reliable representations. This is our main contribution.

\section{PRELIMINARIES}

\subsection{A Brief Intro to Riemannian Geometry}

Riemannian manifolds \citep{docarmo1992,Lee2018rm} are spaces that locally resemble Euclidean spaces. We consider a $d$-dimensional Riemannian manifold isometrically embedded in a $D$-dimensional Euclidean space, following the typographic mnemonic notation that the lower-case symbol denotes the smaller dimensionality. The well-known Nash embedding theorem \citep{Lee2018rm} guarantees that such an embedding exists for all Riemannian manifolds, though it is not an operational definition. 
A \emph{local coordinate chart} is given by the pair $(U,\phi)$, where $U$ is an open set on the manifold and $\phi$ is a bijection between $U$ and an open set of $d$-dimensional Euclidean space. 
For simple manifolds, one might be able to construct a single chart that covers the entire manifold.
However, for more complex ones, we need a collection of local charts that transition smoothly, known as an \emph{atlas}.

A curve with zero tangential acceleration, intuitively a shortest path on the manifold, is known as a \emph{geodesic}.
For a point $\bx$ on a $d$-dimensional Riemannian manifold, one can attach a $d$-dimensional Euclidean space known as the tangent space at $\bx$, containing all vectors that are tangent to the surface at $\bx$. Given the initial position $\bx_{0}$ and the initial velocity $\bbv_{0}$, one can use the exponential map $\text{Exp}_{\bx_{0}}(\bbv_{0})$ to follow along the geodesic for unit time and obtain the final position $\bx_{1}$. Given the initial position $\bx_{0}$ and the final position $\bx_{1}$, one can use the logarithmic map $\text{Log}_{\bx_{0}}(\bx_{1})$ to obtain (one of) the initial velocity that satisfies the boundary condition. These geometric operations are also illustrated in Figure~\ref{fig:teaser}, where we show that these quantities are well-defined even if the manifold is covered by multiple charts.

\subsection{Normalizing Flows}
\label{sec:normalizing_flows}

A normalizing flow \citep{Papamakarios2021flows} pushes forward a tractable distribution $p(\bu)$ from a \emph{latent space} $\bU$ to a complex distribution in the \emph{input space} $\bX$ through the bijection $\bbf$, such that the resulting distribution $q_{\btheta}(\bx)$ in $\bX$ can be tractably evaluated.

For density estimation tasks on Euclidean spaces where $\text{dim}(\bX)=\text{dim}(\bU)=D$, normalizing flows are typically trained by minimizing the forward KL divergence $\text{KL}\left(p(\bx) \vert q_{\btheta}(\bx)\right)$ \citep{Papamakarios2021flows}, where the density $q_{\btheta}(\bx)$ can be evaluated as $q_{\btheta}(\bx) = p(\bu) \vert \text{det}\left(\bJ_{\bbf}(\bu)\right) \vert^{-1}\Big|_{\bu = \bbf^{-1}(\bx)}$, with $\bJ_{\bbf}\in \mathbb{R}^{D\times D}$ being the Jacobian matrix of the parameterized transformation $\bbf: \bU \rightarrow \bX$; the dependency of $\bbf$ on $\btheta$ is dropped to avoid cluttering notations. One can employ specific architectures, such that the log-determinant of the Jacobian can be obtained efficiently. Two notable examples are RealNVP flows \citep{dinh2017realnvp} and Neural Spline flows \citep{durkan2019nsf}.

In many scenarios, one would need to model distributions that are supported on Riemannian manifolds. In the following discussions, we largely follow \citet{Brehmer2020mflows} and \citet{Caterini2021rectangular}. A normalizing flow on a Riemannian manifold is commonly parameterized as the composition of a series of $d$-dimensional layers $\bg$ and a series of a $D$-dimensional layers $\bh$, with the resulting
transformation given by
\begin{equation}
\bbf = \bh \circ \,\text{Pad} \circ \bg = \tilde{\bh} \circ \bg,
\end{equation}
where $\circ$ denotes composition of functions, $\text{Pad}$ denotes adding $D-d$ zeros at the end, and $\tilde{\bh}$ denotes $\bh \circ \,\text{Pad}$. The resulting change of variable formula is more involved \citep{Brehmer2020mflows}
\begin{equation}
q_\theta(\bx) = p(\bu) \left|\text{det}(\bJ_{\bbf}^{\top}(\bu)\bJ_{\bbf}(\bu))\right|^{-\frac{1}{2}}\Big|_{\bu = \bbf^{-1}(\bx)},
\end{equation}
where $\bJ_{\bbf}\in\R^{D\times d}$. Due to the degeneracy, without good reconstructions the model may learn suboptimal distributions despite having seemingly good log-likelihood estimates \citep{Brehmer2020mflows}.
For this reason, they proposed the Manifold-learning Flow, where $\bh$ focuses on accurately reconstructing the data and $\bg$ focuses on learning the density. 

While their mechanism is theoretically justified \citep{Loaiza-ganem2022manifold_overfitting}, \citet{Caterini2021rectangular} noted that the optimization problem becomes challenging, leading to potentially inferior performances in practice. Instead, it can be beneficial to explicitly optimize the KL divergence while penalizing the reconstruction loss. One can define $\tilde{\bh}^{\dagger} = \text{Proj} \circ \bh^{-1}$, where $\text{Proj}$ is the operation that takes the $d$ coordinates; this is the proper inverse function of $\tilde{\bh}$ when $\bx$ lies precisely on the hypersurface formed by $\tilde{\bh}$. The resulting loss function is given by
\begin{equation}
\E_{\bx\sim p(\bx)}\left[-\log q_{\btheta}(\bx) + \lambda \norm{\bx - \tilde{\bh} \left(\tilde{\bh}^{\dagger}\left(\bx\right)\right)}^{2}\right],
\end{equation}
where $\lambda > 0$ controls the trade-off between reconstruction and density estimation. 

Recall that, to obtain $\log q_{\btheta}(\bx)$, we need to compute $\log\text{det}(\bJ_{\bbf}^{\top}\bJ_{\bbf})$. \citet{Caterini2021rectangular} showed that this quantity can be simplified further as
\begin{equation}
\log\det\left(\bJ_{\bbf}^{\top}\bJ_{\bbf}\right) = 2\log\det\left(\bJ_{\bg}\right) + \log\det\left(\bJ_{\tilde{\bh}}^{\top} \bJ_{\tilde{\bh}}\right),
\end{equation}
where $\log\det\left(\bJ_{\bg}\right)$ is tractable as $\bg$ is a square flow, and only $\log\det\left(\bJ_{\tilde{\bh}}^{\top} \bJ_{\tilde{\bh}}\right)$ is a complex term.
For optimization we only need the gradient of the log-likelihood. With reasonably small $d$ as in many practical applications, we can compute the $D \times d$ Jacobian matrix exactly using vectorization and forward mode automatic-differentiation with deep learning frameworks, while the cost of obtaining the log-determinant for $d \times d$ is affordable. For larger scale problems, \citet{Caterini2021rectangular} proposed an estimator based on Hutchinson trace estimator and conjugate gradients, which can potentially speed up optimizations when the number of conjugate gradient steps is small enough and $d$ is large enough. However, we observed that the exact estimate is largely practical and employed it.

\subsection{Persistent Homology}

In order to measure whether we succeeded in learning and engaging with manifolds with non-trivial topology, we rely on the tool of topological data analysis. We briefly introduce the concept and tools for persistent homology based on \citet{wasserman2018tda}. Data can reflect the topological structures of the underlying space, and persistent homology provides a principled tool to capture these topological features, often in the form of a persistence diagram. One usually cares about topological features including $H_{0}$, $H_{1}$ and $H_{2}$, where $H_{0}$ refers to the number of connected components, $H_{1}$ and $H_{2}$ refer to the number of one and two-dimensional holes, respectively. 
A persistence diagram is plotted by monitoring the births and deaths of topological features as balls of varying radii are drawn around data points, with a longer life span implying a more significant feature.
A persistence diagram can be generated from a matrix containing pairwise distances. While pairwise Euclidean distances can be used here, the intrinsic distances may be preferred \citep{fernandez2023intrinsic}. In this work, inspired by the later approach, we use the geodesic distances.

\section{GEOMETRY OF SINGLE-CHART FLOWS}

Largely neglected by previous works concerning normalizing flows on Riemannian manifolds, the flow enables us to learn and utilize the underlying geometric structure of the manifold.

\paragraph{The Learned Manifold.} A fundamental limitation of a single-chart flow is that it assumes that the manifold is globally diffeomorphic to Euclidean. This is an assumption that is violated by many common manifolds, including sphere and torus. Nevertheless, given a trained normalizing flow, one can reason about the learned Riemannian manifold.

Normalizing flows are by definition bijections, so a flow acts as a coordinate chart in the Riemannian sense, and naturally induces a well-defined pullback metric in the latent space $\bU$. We further observe that only $\tilde{\bh}$ contributes to the manifold embedding, while $\bg$ is just a reparametrization $\bZ = \bg(\bU)$ of the latent space. Hence, we denote $\bz= \tilde{\bh}^{\dagger}(\bx)$, and the Riemannian metric in $\bZ$ takes the form
\begin{align}
\label{eq:riemannian_metric}
\bG_{\bZ}\left(\bz\right) = \left(\frac{\partial \bx}{\partial \bz}\right)^{\top} \bG_{\bX}\left(\bx\right) \left(\frac{\partial\bx}{\partial\bz}\right) = \left(\frac{\partial\bx}{\partial\bz}\right)^{\top} \left(\frac{\partial\bx}{\partial\bz}\right),
\end{align}
where $\bG_{\bX}(\bx)=\mathbb{I}_D$ is the ambient Euclidean metric. Furthermore, the Jacobian $\frac{\partial \bx}{\partial \bz}\in\mathbb{R}^{D\times d}$ maps a vector from the latent space to a vector in the ambient space as $\bbv_{\bX}\left(\bx\right) = \frac{\partial \bx}{\partial \bz}\bbv_{\bZ}\left(\bz\right)$. In practice, the metric can be obtained through explicitly calculating the Jacobian matrices, while the transformation between the vectors can be obtained through a Jacobian vector product operation, which is reasonably cheap for modern automatic-differentiation systems \citep{baydin2018ad}. Further discussions on the geometric interpretation can be found in 
Section~\ref{sec:app-geo_details}.

As discussed next, the flow enables us to compute geodesics, which are essential for many practical operations on a Riemannian manifold, such as exponential maps and logarithmic maps. We remark that one can in principle obtain other geometric quantities, e.g. the Riemannian curvature, through performing the relevant operations on the learned metric.

\paragraph{Exponential Maps.}
\label{sec:single-exp}
Using the Riemannian metric \eqref{eq:riemannian_metric}, one can solve the exponential map by solving the geodesic equation as explicitly formulated below.

\begin{theorem}
The geodesic equation in the latent space induced by the pullback metric \eqref{eq:riemannian_metric} is given by
\begin{equation}
\ddot{z}(t)^{k} = - g^{kl} \sum_{i,j,m} \frac{\partial^{2}x_{m}}{\partial z_{i}\partial z_{j}} \frac{\partial x_{m}}{\partial z_{l}} \dot{z}^{i} \dot{z}^{j}.
\end{equation}
\label{thm:single-geodesic}
\end{theorem}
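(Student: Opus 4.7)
The plan is to derive the geodesic equation via the isometric-embedding characterization of geodesics, which is cleaner than directly invoking Christoffel symbols. Because the flow $\tilde{\bh}\colon \bZ \to \bX$ is an isometric embedding of $(\bZ,\bG_{\bZ})$ into Euclidean $\bX$, a curve $z(t)$ in the latent space is a geodesic of the pullback metric if and only if its image $x(t)=\tilde{\bh}(z(t))$ has ambient acceleration $\ddot{x}(t)$ that is normal to the tangent space of the embedded manifold at $x(t)$. The tangent space at $x(t)$ is spanned by the columns of the Jacobian, i.e.\ by the vectors $\partial x/\partial z_l$ for $l=1,\dots,d$.

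First I would differentiate $x_m(t) = x_m(z(t))$ twice using the chain rule to obtain
\begin{equation}
\ddot{x}_m \;=\; \sum_{i,j} \frac{\partial^{2}x_m}{\partial z_i\,\partial z_j}\,\dot{z}^{i}\dot{z}^{j} \;+\; \sum_{i} \frac{\partial x_m}{\partial z_i}\,\ddot{z}^{i}.
\end{equation}
Second, I would impose the orthogonality condition $\sum_m \ddot{x}_m\,\partial x_m/\partial z_l = 0$ for every $l$. Using the definition $g_{il} = \sum_m (\partial x_m/\partial z_i)(\partial x_m/\partial z_l)$ coming from \eqref{eq:riemannian_metric}, the second summand above contracts to $g_{il}\,\ddot{z}^{i}$, yielding
\begin{equation}
g_{il}\,\ddot{z}^{i} \;=\; -\sum_{i,j,m} \frac{\partial^{2}x_m}{\partial z_i\,\partial z_j}\,\frac{\partial x_m}{\partial z_l}\,\dot{z}^{i}\dot{z}^{j}.
\end{equation}
Third, I would raise the index by contracting with the inverse metric $g^{kl}$, which immediately produces the stated formula for $\ddot{z}^{k}$.

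As a sanity check, I would also verify the result through the standard route, computing Christoffel symbols $\Gamma^{k}_{ij} = \tfrac{1}{2} g^{kl}(\partial_{j}g_{il} + \partial_{i}g_{jl} - \partial_{l}g_{ij})$ directly from $g_{ij} = \sum_m \partial_i x_m \,\partial_j x_m$. Expanding each derivative gives six terms; using equality of mixed partials, four of them cancel pairwise and the remaining two combine to $2\sum_m \partial_i\partial_j x_m \,\partial_l x_m$, so that $\Gamma^{k}_{ij} = g^{kl}\sum_m (\partial_i\partial_j x_m)(\partial_l x_m)$, and substitution into $\ddot{z}^{k} + \Gamma^{k}_{ij}\dot{z}^{i}\dot{z}^{j} = 0$ reproduces the claimed equation.

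The main obstacle is conceptual rather than computational: one must justify that the pullback-metric geodesic in $\bZ$ corresponds precisely to the curve whose ambient acceleration has zero tangential component, i.e.\ that $\tilde{\bh}$ is a genuine isometric embedding of $(\bZ,\bG_{\bZ})$ into $(\bX,\mathbb{I}_D)$. This follows from the definition of the pullback metric itself, but it relies on $\tilde{\bh}$ being an immersion (full column-rank Jacobian) so that $\bG_{\bZ}$ is positive definite and $g^{kl}$ exists; in the multi-chart setting covered later this will need to hold chartwise. The algebraic manipulations in both derivations are otherwise routine applications of the chain rule and index gymnastics.
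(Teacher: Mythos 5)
Your primary derivation is correct but takes a genuinely different route from the paper. The paper proceeds by the textbook path: compute $\partial_l g_{ij}$ from $g_{ij}=\sum_m \partial_i x_m\,\partial_j x_m$, substitute into $\Gamma^k_{ij}=\tfrac12 g^{kl}(\partial_i g_{jl}+\partial_j g_{il}-\partial_l g_{ij})$, observe the pairwise cancellations under equality of mixed partials, and plug into $\ddot z^k + \Gamma^k_{ij}\dot z^i\dot z^j = 0$ --- which is exactly your ``sanity check'' route. Your main route instead uses the extrinsic characterization of geodesics on an isometrically embedded submanifold of Euclidean space: differentiate $x(z(t))$ twice, impose that $\ddot x$ is orthogonal to the tangent span $\{\partial x/\partial z_l\}$, and raise the index. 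Both are valid and yield the same equation in essentially the same amount of work. The Christoffel route is intrinsic and applies to any Riemannian metric, not only pullbacks, and in this paper it keeps the derivation parallel to what one would do for, say, a learned metric with no ambient embedding. Your extrinsic route, on the other hand, exploits the fact that $\bG_{\bZ}$ is a pullback by an isometric embedding into flat $\R^D$, which makes the orthogonality condition both the starting point and the whole argument; it also makes transparent why the second-fundamental-form term $\sum_m \partial^2_{ij}x_m\,\partial_l x_m$ appears. The one point you correctly flag --- that $\tilde{\bh}$ must be an immersion so that $g$ is invertible and the geodesic really is the projection of the ambient straight-line defect --- is implicitly assumed in the paper as well, since $g^{kl}$ appears in the statement.
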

The proof can be found in 
Section~\ref{sec:app-exp-maps-proof}.
We remark that this is similar to the geodesic induced by Fisher information matrix with Gaussian likelihood \citep{song2018accelerating}.
The algorithm for solving exponential maps involves explicitly forming the Jacobian matrix, calculating the Riemannian metric and its inverse, thus has complexity $O(d^{3})$. However, $d$ can be much smaller than the dimensionality of the ambient space $D$. 
Moreover, one can avoid explicitly calculating and storing the higher order gradient tensors. Further discussions can be found in
Section~\ref{sec:app_imp_geo}.

\paragraph{Geodesics and Logarithmic Maps.}
\label{sec:single-log}
A geodesic corresponds to a curve with locally minimum energy. One can thus parameterize a curve using a cubic spline in the latent space of the flow and optimize its parameters to minimize the energy as observed in the ambient space \citep{yang:arxiv:2018, detlefsen2021stochman}. Having the optimized curve, we can estimate the distance between the two end points using discretization techniques, and the logarithmic map as the initial velocity.

\paragraph{Pathology of Single-Chart Flows.}
\label{sec:pathology-single}

A single-chart flow is, by definition, a bijection between a $d$-dimensional Euclidean space and the learned manifold. As such, the learned manifold is constrained to be diffeomorphic to a Euclidean space. This is a fundamental limitation: when the underlying data manifold is not diffeomorphic to Euclidean, the flow cannot reliably represent its global structure. Perhaps the simplest example is the circle: although it can be parameterized by a single angular coordinate, the fact that $a$ and $a + 2\pi$ refer to the same point for any $a \in \R$ means it is not Euclidean. In practice, covering the circle with a single flow inevitably introduces a small ``gap'' in the learned manifold. As a result, the implied geodesics are necessarily suboptimal as a geodesic needs to traverse the entire circle to connect the two sides of the gap. In addition, the implied topology is wrong.

\section{MODELING USING MULTI-CHART FLOWS}

We resolve the fundamental issue mentioned above, namely, the limitations caused by using a single flow, by considering a mixture of flows that are carefully designed to properly cover the data manifold while respecting its geometric and topological structure. In particular, we aim for each flow to focus on a specific region of the manifold, and we rely on a probabilistic formulation to ensure that overlapping regions exist which is crucial when covering a manifold. We then leverage this construction to extend standard geometric computations, such as geodesics, to this data-driven manifold. We note that the main goal of the proposed generative model is to provide the suitable foundation upon which geometric quantities can be computed, while preserving the intrinsic properties of the manifold.

\paragraph{Model.}
The previous discussions suggest that we need to use multiple flows instead of a single one. We thus consider the following mixture model
\begin{equation}
\sum_{i=1}^{N}\log q_{\btheta}(\bx_{i}) = \sum_{i=1}^{N}\log\left(\sum_{c=1}^{C} q_{\btheta}(\bx_{i}|c) q(c)\right),
\end{equation}
where $N$ denotes the number of data points, $C\in\mathbb{Z}^+$ denotes the number of charts, and each $q_{\btheta}(\bx|c)$ is a normalizing flow with latent dimensionality $d$ and ambient dimensionality $D$ for chart $c$.

\paragraph{Training.}
With the mixture model formulation, our goal is to minimize the distance between the model distribution and the ground truth distribution. 
We can write the resulting KL divergence as

\begin{equation}
\text{KL}\left(p(\bx) \vert q_{\btheta}(\bx)\right) = \text{KL}\left(p(\bx) \left\vert \sum_{c=1}^{C} q_{\btheta}(\bx|c) q(c)\right. \right).
\end{equation}

However, as noted by previous works \citep{Brehmer2020mflows,Caterini2021rectangular}, purely performing Maximum Likelihood Estimation (MLE) training is not sufficient for fitting the degenerate flow to a distribution supported on the manifold. Due to degeneracy, the log-density as calculated by the normalizing flow is in fact the log-density projected onto the learned manifold. Therefore, inspired by \citet{Caterini2021rectangular}, we calculate the required quantities by adding a regularization term to the log-density of the flow. The resulting log-density for a single data point is given by
\begin{equation}
\label{eq:mixture_log_component}
\log \tilde{q}_{\btheta}(\bx_{i}|c) = \log q_{\btheta}(\bx_{i}|c) - \lambda \norm{\bx_{i} - \tilde{\bh}_{c} \left(\tilde{\bh}_{c}^{\dagger}\left(\bx_{i}\right)\right)}^{2}.
\end{equation}
Depending on the problem one may add additional regularization terms here. Upon optimality, one can expect the different flows to learn essentially the same reconstructions in overlapping regions, while modeling the target density through a mixture model.
There can be potentially an infinite number of models that fit the manifold and the distribution perfectly.

For training a mixture model, there are generally two commonly employed approaches, i.e. directly performing Maximum Likelihood Estimation (MLE) and using Expectation Maximization (EM). These approaches can be employed in our case as well. In this work, we always employ a uniform prior over $q(c)$, so as to encourage the individual flows to cover approximately equal amount of space and prevent them from not being responsible for any points, leading to a waste of model capacity.

\paragraph{Maximum Likelihood Estimation (MLE).}
Using $\log \tilde{q}_{\btheta}(\bx_{i}|c)$ as in \ref{eq:mixture_log_component},  one can define
\begin{equation}
\log \tilde{q}_{\btheta}(\bx) = \sum_{i=1}^{N}\log \sum_{c=1}^{C} \exp\left(\log \tilde{q}_{\btheta}(\bx_{i}|c) + \log q(c)\right),
\end{equation}
which can be implemented using the \textit{logsumexp} operation, enabling direct MLE training of the flows.

\paragraph{Expectation Maximization (EM).}
EM is arguably the most commonly used algorithm for training mixture models, and we adapt the EM algorithm to train multi-chart flows. During EM training, the E-step and the M-step are iteratively applied. In one E-step, the responsibilities of the individual flows can be calculated as
\begin{equation}
r_{c}(\bx_{i}) = \text{stop\_gradient}\left(\text{SoftMax}\left(\log \tilde{q}_{\btheta}(\bx_{i}|c)\right)\right),
\end{equation}
while in one M-step, one minimizes the loss function
\begin{equation}
L(\btheta) = -\sum_{i=1}^{N}\sum_{c=1}^{C}r_{c}(\bx_{i}) \log \tilde{q}_{\btheta}(\bx_{i}|c).
\end{equation}
Further discussions can be found in 
Section~\ref{sec:app-em}.
We employ a single gradient descent update for the M-step. 
One cannot expect a flow to correctly model data points that are far from its responsible area. As such, for an individual flow we only calculate the loss based on points where its responsibilities are above a threshold. The probabilistic formulation using responsibilities allows the charts to overlap in some regions.

\paragraph{General Strategy.}

In preliminary experiments, we did not observe fundamental differences between direct MLE and EM. However, the responsibilities as provided by EM offer interpretability and ways to regularize the flows. In this work, we focus on EM. For training mixture models, one common strategy is to initialize the clusters using simple clustering algorithms like K-Means \citep{bishop2006prml}. Similarly, we apply K-Means to obtain $C$ clusters, training each flow individually within the clusters before running EM.

We remark that training a mixture of normalizing flows has been explored before, where both MLE and EM have been demonstrated to work \citep{izmailov2020semi_supervised,atanov2020semi_conditional,ng2023mixture_flows}. Nevertheless, it is less investigated as for how to train mixtures of flows when both the data manifold and the distribution are unknown.

\section{GEOMETRY OF MULTI-CHART FLOWS}

\paragraph{The Learned Manifold.}

Similar to the case of classical differential geometry, with multi-chart flows, there may not exist a unique latent space. However, one can still obtain the correct geometric structure from all of them. When the flows are perfectly trained, each individual flow acts as a local chart due to its bijective nature, and we obtain a well-defined atlas that covers the entire manifold by collecting all the charts. However, in practice, we only have access to a finite and potentially noisy dataset, the modeling and optimization are most likely imperfect, and one \textit{cannot} expect the flows to be perfect.

We argue that we can still reason geometrically through \textit{inconsistent} flows by utilizing a probabilistic treatment, which has been demonstrated useful for learning meaningful geometry in VAEs \citep{arvanitidis2018latent,hauberg2019only}. Denote the value of interest as $\be$. Given $C$ trained flows, since each flow would give us an estimate, we can collect all these estimates to form a set $\{\be_{c},c=1,\dots ,C\}$, and the task is to assign the right weight to each $\be_{c}$. From a Bayesian perspective, a natural approach to obtain a single estimate $\be$ is to use the predictive posterior $\E_{p(c|\bx)}\left[\be_{c}\right]$.

Interestingly, these weights are precisely the responsibilities given by the EM algorithm.  As such, we define the points that lie on the manifold as the set $\{\sum_{c=1,\dots ,C} r_{c}(\bx') \tilde{\bh}_{c}\left(\tilde{\bh}_{c}^{\dagger}\left(\bx'\right)\right), ~\forall \bx' \in \mathbb{R}^{D}\}$. This can be seen as a projection of the points $\bx'$ in the ambient space onto the learned manifold, which is well-defined for sufficiently close points $\bx'$. We can similarly define the tangent space at point $\bx$ as the tangent spaces of different charts weighted by $r(\bx)$. When all the flows yield the correct estimate everywhere, these estimates correspond to the ground truth values. Otherwise, the probabilistic treatment results in more robust estimates. In practice, we may set a threshold on the responsibilities and form the predictive only based on those that are above the threshold. Since the flows are trained based on them, we can expect the responsibilities to yield well-defined behavior. \citet{alberti2024manifold} also observed the benefit of averaging, albeit in the context of Riemannian optimizations.

\paragraph{Exponential Maps.}
\label{sec:mult-exp}
Due to the involved definition of manifold structure as discussed above, we need specialized tools to compute the exponential map. Perhaps the most natural way is to apply an algorithm inspired by classical differential geometry, where one solves the exponential map based only on the chart with maximal responsibility, and jumps to another when the most responsible one changes. However, as the responsibilities of the charts decrease, the estimates of the charts can degrade in different ways. Practically, the numerical solvers make it even more challenging. Therefore, we argue that it is beneficial to solve the exponential maps using the responsibilities as calculated in the EM algorithm as weights.

Algorithm~\ref{alg:mult-exp-euler} provides an Euler-like method.
We compute a unique curve directly in the ambient space, and instead of relying exclusively on the exponential map from a single chart, each update step is computed as the weighted average over multiple charts using the corresponding responsibilities. In each responsible chart, we simulate the geodesic for $\Delta t$ time starting from $\bx_{t}$ and $\bbv_{t}$, which can be implemented using any black-box ODE solvers through numerically integrating the geodesic ODE. We provide further ablation studies on the different solvers in Section~\ref{sec:app-exp-ints}, verifying that averaging is crucial for a well-behaved solver.

\paragraph{Geodesics and Logarithmic Maps.} To compute the geodesic and logarithmic map between two points, based on the solution above and Section~\ref{sec:single-log}, one might be tempted to parameterize a curve in the latent spaces of all the individual charts. However, a flow can be arbitrarily bad outside its responsible region, and it is unclear how to obtain the final curve based on the ones induced by the charts. Since a geodesic can naturally transition between charts, a more direct approach is to parameterize a curve across multiple latent spaces while accounting for transitions. However, this leads to a particularly challenging optimization problem.

Instead of the costly direct optimization, our key insight is that multi-chart flows enable us to map a curve in the ambient space onto the manifold through the function $\bx\rightarrow \sum_{c=1,\dots ,C} r_{c}(\bx) \tilde{\bh}_{c}\left(\tilde{\bh}_{i}^{\dagger}\left(\bx\right)\right)$. Using this, we propose to parameterize in the ambient space a curve $\gamma_\phi(t)$, e.g., a spline, and optimize the energy of the mapped curve on the learned manifold. The logarithmic map can then be obtained as the initial velocity of the resulting curve (see Algorithm~\ref{alg:mult-log}).

\begin{algorithm}[tb]
\caption{Algorithm for solving the exponential maps using a fixed $T$.}
\For{$t \gets 0$ \KwTo $T-1$}{
    $[\bc , \br] \gets \text{filter}\left(\text{resps}\left(\bx_{t}\right), \text{resp\_threshold}\right)$ \Comment*[r]{\small \textnormal{// Get filtered responsibilities}}
    \For{$c \in\ \{1,\dots ,C\}$}{
        $[\bx_{t+1}^{c}, \bbv_{t+1}^{c}] \gets \text{Exp}_{c,\Delta t}\left(\bx_{t},\bbv_{t}\right)$ \Comment*[r]{\small \textnormal{// Move on the geodesic induced by the $c$-th flow for $\Delta t$}}
    }
    $\bx_{t+1} \gets \sum_{c=1}^{C} r_{c}\bx_{t+1}^{c}$\;
    $\bbv_{t+1} \gets \sum_{c=1}^{C} r_{c}\bbv_{t+1}^{c}$\;
}
\label{alg:mult-exp-euler}
\end{algorithm}

\begin{algorithm}[tb]
\caption{Algorithm for solving the geodesics and logarithmic maps.}
Initialize $\gamma_{\phi}(t)$\;
\For{$i \gets 1$ \KwTo $N_{\text{iters}}$}{
    \For{$t \gets 0$ \KwTo $T-1$}{
        $[\bbc,\br] \gets \text{filter}\left(\text{resps}\left(\bx_{t}\right), \text{resp\_threshold}\right)$ \Comment*[r]{\small \textnormal{// Get filtered responsibilities}}
        $\bx_{t} \gets \sum_{c=1}^{C}r_{c}(\gamma_{\phi}(t))\tilde{\bh}_{c}\left(\tilde{\bh}_{c}^{\dagger}\left(\gamma_{\phi}(t)\right)\right)$\;
    }
    Minimize $E\left(\left\{\bx_{t},t=1,\dots ,T-1\right\}\right)$ with respect to $\gamma_\phi(t)$
}
Fit a new curve $\tilde{\gamma}_\phi(t)$ to $\sum_{c=1}^{C}r_{c}(\gamma_{\phi}(t))\tilde{\bh}_{c}\left(\tilde{\bh}_{c}^{\dagger}\left(\gamma_{\phi}(t)\right)\right)$\;
Compute $\bbv_{0}$ from $\tilde{\gamma}_{\phi}(t)$ as the initial velocity\;
\label{alg:mult-log}
\end{algorithm}

\begin{figure*}[h]
    \centering
    \includegraphics[width=0.7\linewidth]{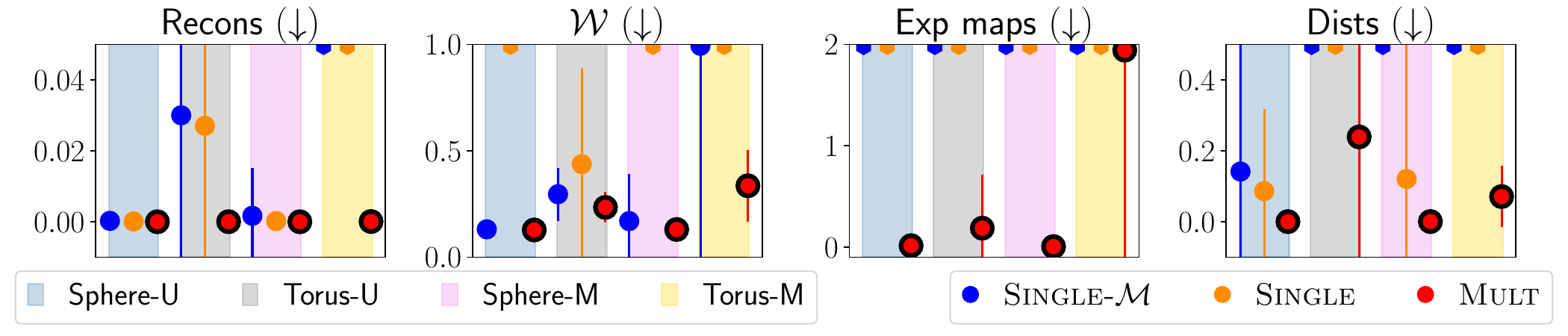}
    \caption{Evaluation metrics on sphere and torus, best methods are circled. \mult\ consistently leads to better performances. Since geodesics are not unique, in some cases, the found one is not the shortest.}
    \label{fig:sphere-torus-results}
\end{figure*}

\begin{figure*}[h]
    \centering    
    \includegraphics[width=0.6\linewidth]{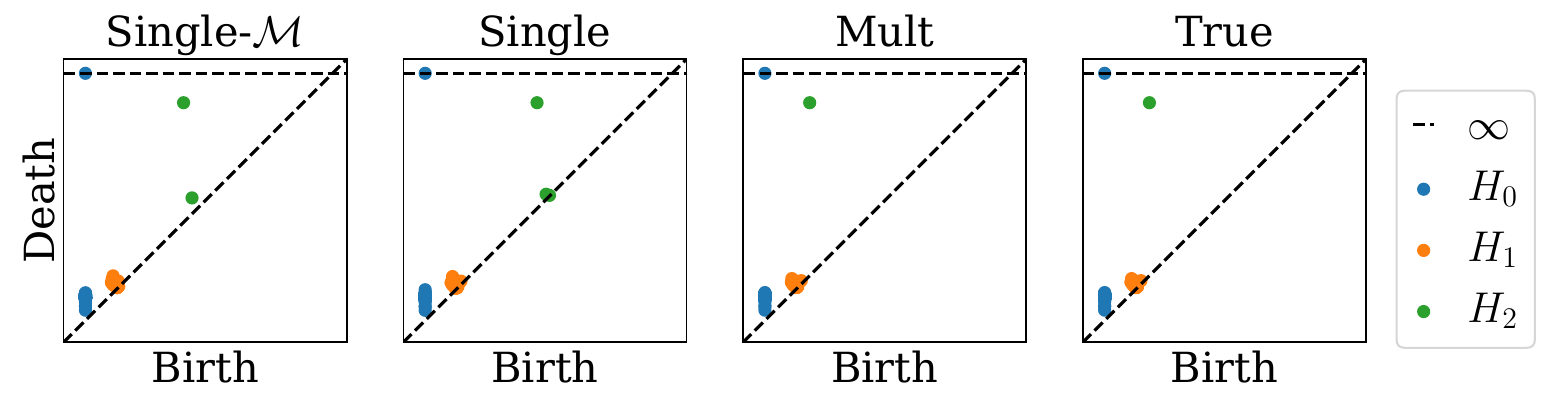}
    \caption{Example persistence diagrams for the Sphere-U data. Points further to the top left corner indicate more significant features. \mult\ provides the most faithful representation of the underlying topology.}
    \label{fig:sphere-torus-diagrams}
\end{figure*}

\section{EXPERIMENTS}

We perform experiments across various datasets to demonstrate the utility of multi-chart flows to model manifold valued data while implicitly capturing the underlying geometry and topology. We compare multi-chart flows (\mult) with single-chart flows trained through both sequential manifold learning and density estimation (\singlem) and direct MLE training (\single). 
For the geodesics, it is known that solutions are not unique; e.g., given two points that are approximately on the opposite of the sphere, the solver may as well find the curve completely opposite of the shortest geodesic, which results in different logarithmic map yet reasonable distance. Hence, we focus on the distances instead of the logarithmic maps.
We evaluate the methods using five complementary metrics: (1) the reconstruction loss measured on a test set, (2) the Wasserstein distance of generated samples to a test set, (3)(4) the mean squared errors of the estimated exponential maps and geodesic distances compared with the ground truths, and (5) the faithfulness of the persistence diagram in capturing the topology. For fair comparisons, we always ensure that all kinds of flows have the same total number of layers for both the outer flows and inner flows. For RealNVP flows the numbers of parameters are exactly the same for single-chart flows and multi-chart flows. For Neural Spline flows since we employ LU Linear Permute before and after every Neural Spline layers, the numbers of parameters differ slightly between single-chart flows and multi-chart flows, but remain comparable.
Further experimental details can be found in Section~\ref{sec:app-exp-details}. We provide code that can be used to reproduce our results at \url{https://github.com/ksnxr/multi-chart-flows}.

\begin{figure*}[h]
    \centering
    \begin{tabular}{cc}
        \includegraphics[width=0.55\textwidth]{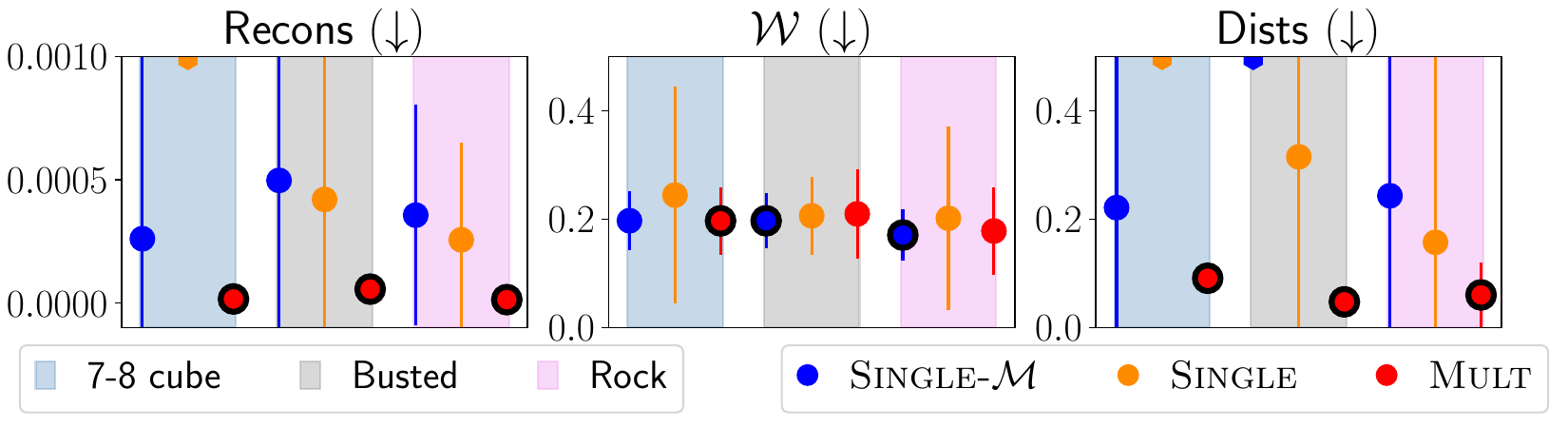} & \includegraphics[width=0.18\textwidth]{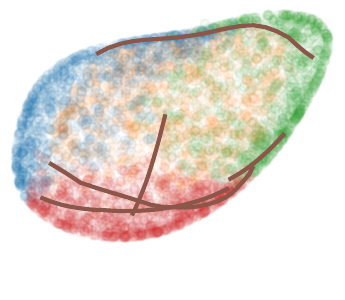}
    \end{tabular}
    \caption{Left: evaluation metrics on triangular meshes, best methods are circled. \mult\ consistently leads to better reconstructions and distance estimates while having competitive sample quality. Right: a trained multi-chart flow enables us to estimate the logarithmic maps.}
    \label{fig:mesh-results}
\end{figure*}

\begin{figure*}[h]
    \centering
    \begin{tabular}{cc}
        \includegraphics[width=0.45\textwidth]{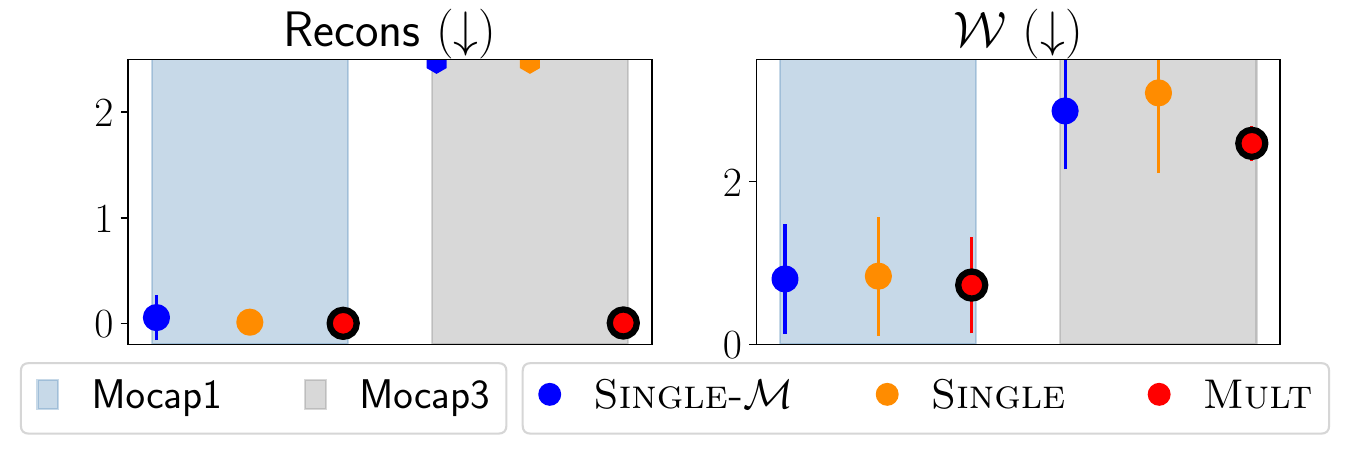} & \includegraphics[width=0.15\textwidth]{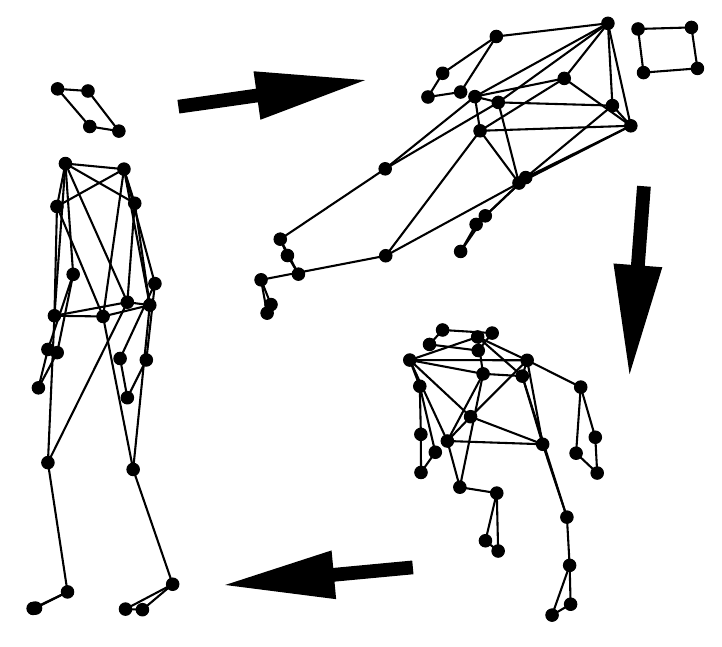}
    \end{tabular}
    \caption{Left: evaluation metrics on Mocap, best methods are circled. \mult\ has the best reconstructions and sample quality for both Mocap1 and Mocap3. Right: The Mocap frame is rotated, here along the fixed axis. The ambient coordinates thus trace along a low dimensional manifold.}
    \label{fig:mocap-results}
\end{figure*}

\begin{table*}[!htbp]
        \begin{center}
        \caption{The lengths of the $5$ segments between $5$ points uniformly placed along the topologically-circular data manifold for each method. Ground truth is approximated using the length of the discretized curve with $1000$ points. Only \mult\ correctly reflects the circular nature of the data. Single-chart flows always show a linear-like structure where one segment is significantly larger.}
        \vspace{1mm}
                \begin{tabular}{llllll}
        			\toprule
        			\textbf{Method} & \multicolumn{5}{c}{\textbf{Lengths} (mean and variance over ten models randomly  initialized)} \\
        			\midrule
        			\singlem & $15.1 \pm 10.6$ & $18.5 \pm 13.6$ & $25.4 \pm 17.4$ & $11.9 \pm 0.2$ & $21.7 \pm 15.9$ \\
        			\single & $18.4 \pm 13.7$ & $25.3 \pm 16.8$ & $11.2 \pm 0.0$ & $15.2 \pm 10.2$ & $21.7 \pm 16.0$ \\
        			\mult & $11.6 \pm 0.0$ & $11.7 \pm 0.0$ & $11.2 \pm 0.0$ & $11.9 \pm 0.2$ & $11.3 \pm 0.0$ \\
        			\textbf{Ground Truth} & 11.6 & 11.7 & 11.2 & 11.8 & 11.3 \\
        			\bottomrule
    		      \end{tabular}
                \label{tbl:mocap-distances}
        \end{center}
\end{table*}

\paragraph{Sphere and Torus.}
\label{sec:sphere-torus}

We consider modeling distributions supported on two dimensional sphere and two dimensional torus using RealNVP flows \citep{dinh2017realnvp}, where the data and the ground truth geometric quantities are obtained using Geomstats \citep{miolane2020geomstats_jmlr,miolane2024geomstats_software} and Pyro \citep{bingham2019pyro}. For \singlem\ and \single, we use $12$ layers for $\bg$ and $36$ layers for $\bh$. For \mult, we use $4$ charts, each chart with $3$ layers for $\bg$ and $9$ layers for $\bh$. For both manifolds, we consider both the uniform distribution, denoted as -U, and the mixture of Von-Mises Fisher and Bivariate Von-Mises distributions, denoted as -M, respectively. 
Figure~\ref{fig:sphere-torus-results} shows that \mult\ yields the best performances in all cases considered. We additionally report the persistence diagrams computed based on the corresponding distance estimates. If these pairwise distances do not accurately reflect the topology of the underlying manifold, the resulting persistent homology may fail to recover the true topological features. Figure~\ref{fig:sphere-torus-diagrams} verifies that the distances as learned using \mult\ allow to correctly identify the topological structure of the sphere; further diagrams can be found in
Section~\ref{sec:app-res-sphere-torus}.

\paragraph{Triangular Meshes.}

We use the same RealNVP flows \citep{dinh2017realnvp} as for sphere and torus
and triangular meshes provided by Trimesh \citep{dawson2025trimesh}. Inspired by Trimesh, we refine the mesh and approximate the ground truth distances between the points using a neighborhood graph; for a dense enough mesh this approximation is reasonable. Figure~\ref{fig:mesh-results} shows that for these irregular shapes \mult\ still provides consistently better reconstructions and distance measures. Also, it enables calculating the exponential maps and logarithmic maps, even if the true ones are intractable.

\paragraph{Motion Capture Data.}

We use the motion capture data from MocapToolbox \citep{burger2013mocap} and generate datasets where the frame is randomly rotated. We consider (1) the axis is fixed and the angle is uniformly at random which forms a 1-dimensional manifold (\textit{Mocap1}), and (2) both the axis and the angle are uniformly at random, which forms a 3-dimensional manifold due to the connection to the well-known $SO(3)$ manifold (\textit{Mocap3}). As the frame is rotated, it traces a trajectory in the embedding space, which naturally inherits the topology of the employed rotations. The embeddings of the manifold may not be isometric and the ground truth exponential maps and logarithmic maps are intractable. While the \textit{ground truth} distances are in principle intractable, we approximate them numerically by calculating the lengths of discretized curves connecting the embeddings using $1000$ points and report them as the ground truth. For a model with correct topology, one would expect the circular topology to be preserved. We employ Neural Spline flows \citep{durkan2019nsf}. For Mocap1, with \mult\ we employ $2$ charts, each with $6$ layers for $\bg$ and $12$ layers for $\bh$; \singlem\ and \single\ use $12$ layers for $\bg$ and $24$ layers for $\bh$. For Mocap3, \mult\ uses $5$ charts, each with $4$ layers for $\bg$ and $8$ layers for $\bh$, while \singlem\ and \single\ use $20$ layers for $\bg$ and $40$ layers for $\bh$.
Figure~\ref{fig:mocap-results} shows that \mult\ always results in the best reconstructions and on-par or better sample quality. Additionally, we analyzed the distances as learned by the models in terms of Mocap1. Table~\ref{tbl:mocap-distances} shows that only \mult\ consistently learn a reasonable measure of lengths, agreeing well with the ground truth and reflecting the correct circular structure of Mocap1.

\section{DISCUSSION}

An important contribution of our paper is to handle non-overlapping charts as they are bound to happen in machine learning applications. We thus developed the first numerical algorithms, to the best of our knowledge, for computing geodesics on such multi-charted manifolds. We also demonstrated that multiple charts are essential when aiming to capture topology. While this is a long-standing knowledge in mathematics, the topic seems to have been underappreciated in the machine learning community.

For generative modeling, one can make arguments that composing multiple smaller models is better than utilizing a single large model \citep{du2024position}. Driven by a different motivation, we make similar findings. It is an interesting question as for whether the success of model composition is in part due to the non-trivial manifold structure of real world datasets.

We focus on normalizing flows which enable fast and exact density evaluations, due to their ability to achieve fast inference and efficient computations of geometric quantities involving the Jacobians, along with the tractability of training degenerate normalizing flows. Flow matching \citep{lipman2023fm} is a recent family of algorithms that learns a continuous normalizing flow parameterized by a vector field, without needing numerical integrations during training. It has been extended to Riemannian manifolds \citep{chen2024rfm}, however both the base distribution and the velocities are defined on the target manifold itself. Free-form flows \citep{draxler2024fff,sorrenson2024lifting} is another family of normalizing flow method that has been extended to Riemannian manifolds. However, they are not guaranteed to be bijections, making the differential geometric interpretations less grounded. While we assume that the dimensionality is known beforehand, \citet{Zhang2023spread_flows} shows that is possible to identify the intrinsic dimensionality while training the flow. We leave further explorations utilizing these methods as future work.

\paragraph{Limitations.} Multi-chart flows are more technically involved and can be difficult to train especially for manifolds with high curvature. Similarly, computing geodesics becomes more challenging as curvature increases. However, our proposed methodology is, in principle, generic as more advanced flow models can be easily incorporated to improve the performance.

\paragraph{Conclusion.} Single-chart flows are fundamentally incapable of respecting complex geometric structures. In contrast, we showed that this is possible when using multi-chart flows which can capture the geometry and topology of the underlying space while learning the distribution. In addition, we provided specifically designed numerical methods to  approximate geometric quantities, such as geodesics, on the learned manifolds induced by the flows that may not perfectly align. Overall, our work enables computational differential geometry on data manifolds with non-trivial topology.

\subsubsection*{Acknowledgements}
HY, MH and AK were supported by the Research Council of Finland Flagship programme: Finnish Center for Artificial Intelligence FCAI and by the grants 363317 and 348952, and acknowledge the research environment provided by ELLIS Institute Finland. SH was supported by a research grant (42062) from VILLUM FONDEN, received funding from the European Research Council (ERC) under the European Union’s Horizon research and innovation programme (grant agreement 101125993) and was partly funded by the Novo Nordisk Foundation through the Center for Basic Machine Learning Research in Life Science (NNF20OC0062606). GA was supported by the DFF Sapere Aude Starting Grant ``GADL''. The authors wish to acknowledge CSC - IT Center for Science, Finland, for computational resources.

\bibliography{paper_references}

\section*{Checklist}

\begin{enumerate}

  \item For all models and algorithms presented, check if you include:
  \begin{enumerate}
    \item A clear description of the mathematical setting, assumptions, algorithm, and/or model. [Yes]
    \item An analysis of the properties and complexity (time, space, sample size) of any algorithm. [Yes]
    \item (Optional) Anonymized source code, with specification of all dependencies, including external libraries. [Yes]
  \end{enumerate}

  \item For any theoretical claim, check if you include:
  \begin{enumerate}
    \item Statements of the full set of assumptions of all theoretical results. [Yes]
    \item Complete proofs of all theoretical results. [Yes]
    \item Clear explanations of any assumptions. [Yes]     
  \end{enumerate}

  \item For all figures and tables that present empirical results, check if you include:
  \begin{enumerate}
    \item The code, data, and instructions needed to reproduce the main experimental results (either in the supplemental material or as a URL). [Yes]
    \item All the training details (e.g., data splits, hyperparameters, how they were chosen). [Yes]
    \item A clear definition of the specific measure or statistics and error bars (e.g., with respect to the random seed after running experiments multiple times). [Yes]
    \item A description of the computing infrastructure used. (e.g., type of GPUs, internal cluster, or cloud provider). [Yes]
  \end{enumerate}

  \item If you are using existing assets (e.g., code, data, models) or curating/releasing new assets, check if you include:
  \begin{enumerate}
    \item Citations of the creator If your work uses existing assets. [Yes]
    \item The license information of the assets, if applicable. [Yes]
    \item New assets either in the supplemental material or as a URL, if applicable. [Yes]
    \item Information about consent from data providers/curators. [Not Applicable]
    \item Discussion of sensible content if applicable, e.g., personally identifiable information or offensive content. [Not Applicable]
  \end{enumerate}

  \item If you used crowdsourcing or conducted research with human subjects, check if you include:
  \begin{enumerate}
    \item The full text of instructions given to participants and screenshots. [Not Applicable]
    \item Descriptions of potential participant risks, with links to Institutional Review Board (IRB) approvals if applicable. [Not Applicable]
    \item The estimated hourly wage paid to participants and the total amount spent on participant compensation. [Not Applicable]
  \end{enumerate}

\end{enumerate}

\clearpage
\appendix

\onecolumn
\aistatstitle{Learning Geometry and Topology via Multi-Chart Flows: \\
Supplementary Materials}

\section{PROOF OF THEOREM 1}
\label{sec:app-exp-maps-proof}

We restate Theorem 1, and provide the proof.

\begin{theorem}
The geodesic equation in the latent space induced by the pullback metric is given by
\begin{equation}
\ddot{z}(t)^{k} = - g^{kl} \sum_{i,j,m} \frac{\partial^{2}x_{m}}{\partial z_{i}\partial z_{j}} \frac{\partial x_{m}}{\partial z_{l}} \dot{z}^{i} \dot{z}^{j}.
\end{equation}
\label{thm:single-geodesic}
\end{theorem}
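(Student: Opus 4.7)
The plan is to derive the geodesic equation directly from the definition of the pullback metric, either by computing Christoffel symbols or, more cleanly, by exploiting the fact that the flow provides an isometric embedding into Euclidean space. I would favor the embedded-submanifold route because it sidesteps the bookkeeping of the three-term Christoffel formula and makes the structure of the final expression transparent.

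First, I would set up notation. Write the curve in the ambient space as $x_m(t) = x_m(\bz(t))$, so by the chain rule $\dot{x}_m = \sum_i \frac{\partial x_m}{\partial z_i}\dot{z}^i$ and $\ddot{x}_m = \sum_{i,j}\frac{\partial^2 x_m}{\partial z_i\partial z_j}\dot{z}^i\dot{z}^j + \sum_i \frac{\partial x_m}{\partial z_i}\ddot{z}^i$. Because the ambient metric in \eqref{eq:riemannian_metric} is the identity, a curve is a geodesic of the pullback metric if and only if its ambient acceleration $\ddot{\bx}$ is orthogonal to the tangent space at $\bx$. The tangent space is spanned by the columns $\partial\bx/\partial z_l$, so geodesy is equivalent to
\begin{equation}
\sum_m \ddot{x}_m\,\frac{\partial x_m}{\partial z_l} = 0 \qquad \text{for all } l.
\end{equation}

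Next, I would substitute the expression for $\ddot{x}_m$ into this orthogonality condition. The term containing $\ddot{z}^i$ collapses to $\sum_i g_{il}\ddot{z}^i$ by the very definition $g_{il} = \sum_m \frac{\partial x_m}{\partial z_i}\frac{\partial x_m}{\partial z_l}$ of the pullback metric. This yields
\begin{equation}
\sum_i g_{il}\ddot{z}^i = -\sum_{i,j,m} \frac{\partial^2 x_m}{\partial z_i\partial z_j}\frac{\partial x_m}{\partial z_l}\dot{z}^i\dot{z}^j.
\end{equation}
Contracting both sides with $g^{kl}$ and using $\sum_l g^{kl}g_{il} = \delta^k_i$ then isolates $\ddot{z}^k$ and gives exactly the claimed formula.

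As a sanity check I would also verify the Christoffel-symbol derivation: plugging $g_{ij} = \sum_m \partial_i x_m\,\partial_j x_m$ into $\Gamma^k_{ij} = \tfrac{1}{2} g^{kl}(\partial_i g_{jl} + \partial_j g_{il} - \partial_l g_{ij})$ produces six terms via the product rule, and the symmetry of the second derivatives in the indices causes four of them to cancel in pairs, leaving $\Gamma^k_{ij} = g^{kl}\sum_m \frac{\partial^2 x_m}{\partial z_i\partial z_j}\frac{\partial x_m}{\partial z_l}$, which is consistent with the embedded derivation.

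The main obstacle is conceptual rather than computational: one has to justify that the orthogonality characterization of geodesics applies, i.e.\ that the Levi-Civita connection of the pullback metric coincides with the tangential part of the Euclidean connection. This is a standard fact for isometric embeddings and is exactly the setting stipulated in the Preliminaries, so a one-line appeal suffices. The actual algebra is mechanical once the chain rule has been applied carefully.
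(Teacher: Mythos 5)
Your proposal is correct, and the primary argument takes a genuinely different route from the paper. The paper works entirely in local coordinates: it expands $\partial_l g_{ij}$ by the product rule, assembles $\tfrac{1}{2}(\partial_i g_{jl} + \partial_j g_{il} - \partial_l g_{ij})$, and lets the symmetry of mixed partials collapse six terms to two, recovering $\Gamma^k_{ij} = g^{kl}\sum_m \partial^2_{ij}x_m\,\partial_l x_m$ — which is exactly the computation you relegate to a sanity check. Your main route instead invokes the characterization of geodesics on an isometrically embedded submanifold of Euclidean space: $\ddot{\bx}$ must be normal to the tangent space, i.e.\ $\sum_m \ddot{x}_m\,\partial x_m/\partial z_l = 0$. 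Expanding $\ddot{x}_m$ by the chain rule, the $\ddot{z}$ term contracts to $g_{il}\ddot{z}^i$ by the very definition of the pullback metric, and raising the index gives the result in two lines. This buys you transparency — it is immediately clear why the second derivatives of $\bx$ appear contracted against a tangent vector, namely as (minus) the tangential projection of the curve's ambient acceleration — at the modest cost of citing the standard fact that the Levi-Civita connection of a pullback metric under an isometric embedding is the tangential part of the ambient connection. You correctly flag that lemma as the only non-mechanical step; since the paper already stipulates an isometric embedding into $\R^D$ with the Euclidean metric, the appeal is legitimate. The Christoffel computation in the paper is self-contained and assumes less background, but yours exposes the geometry more directly; both are sound and reach the identical formula.
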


\begin{proof}
Recall that the pullback metric is given by
\begin{align}
\bg &= \left(\frac{\partial\bx}{\partial\bz}\right)^{\top} \frac{\partial\bx}{\partial\bz},\\
g_{ij} &= \sum_{m}\frac{\partial x_{m}}{\partial z_{i}} \frac{\partial x_{m}}{\partial z_{j}}.
\end{align}

The Christoffel symbols and the geodesic equation are given by \citep{Lee2018rm} (Equation 5.8 and Equation 4.16)
\begin{align}
\Gamma^{k}_{ij} &= \frac{1}{2}g^{kl}\left(\partial_{i}g_{jl} + \partial_{j}g_{il} - \partial_{l}g_{ij}\right),\\
\ddot{z}(t)^{k} + \dot{z}^{i}(t) \dot{z}^{j}(t) \Gamma^{k}_{ij}(z(t)) &= 0.
\end{align}
We have
\begin{equation}
\partial_{l}g_{ij} = \sum_{m} \frac{\partial^{2}x_{m}}{\partial z_{i}\partial z_{l}} \frac{\partial x_{m}}{\partial z_{j}} + \frac{\partial x_{m}}{\partial z_{i}} \frac{\partial^{2}x_{m}}{\partial z_{j}\partial z_{l}},
\end{equation}
and
\begin{align}
&\quad \frac{1}{2}\left(\partial_{i}g_{jl} + \partial_{j}g_{il} - \partial_{l}g_{ij}\right) \\
&= \frac{1}{2}\left(\sum_{m} \frac{\partial^{2}x_{m}}{\partial z_{j}\partial z_{i}} \frac{\partial x_{m}}{\partial z_{l}} + \frac{\partial x_{m}}{\partial z_{j}} \frac{\partial^{2}x_{m}}{\partial z_{l}\partial z_{i}} + \frac{\partial^{2}x_{m}}{\partial z_{i}\partial z_{j}} \frac{\partial x_{m}}{\partial z_{l}} + \frac{\partial x_{m}}{\partial z_{i}} \frac{\partial^{2}x_{m}}{\partial z_{l}\partial z_{j}}\right. \\
&\quad \left. - \frac{\partial^{2}x_{m}}{\partial z_{i}\partial z_{l}} \frac{\partial x_{m}}{\partial z_{j}} - \frac{\partial x_{m}}{\partial z_{i}} \frac{\partial^{2}x_{m}}{\partial z_{j}\partial z_{l}} \right) \\
&= \frac{1}{2}\left(\sum_{m} 2\frac{\partial^{2}x_{m}}{\partial z_{i}\partial z_{j}} \frac{\partial x_{m}}{\partial z_{l}} \right) = \sum_{m} \frac{\partial^{2}x_{m}}{\partial z_{i}\partial z_{j}} \frac{\partial x_{m}}{\partial z_{l}}.
\end{align}
As such,
\begin{align}
&\quad \dot{z}^{i}(t)\dot{z}^{j}(t)\Gamma^{k}_{ij}(z(t)) = \dot{z}^{i}(t)\dot{z}^{j}(t) \frac{1}{2}g^{kl}\left(\partial_{i}g_{jl} + \partial_{j}g_{il} - \partial_{l}g_{ij}\right) \\
&= \dot{z}^{i}(t)\dot{z}^{j}(t) g^{kl} \sum_{m} \frac{\partial^{2}x_{m}}{\partial z_{i}\partial z_{j}} \frac{\partial x_{m}}{\partial z_{l}} = g^{kl}\sum_{m} \frac{\partial^{2}x_{m}}{\partial z_{i}\partial z_{j}} \frac{\partial x_{m}}{\partial z_{l}} \dot{z}^{i} \dot{z}^{j},
\end{align}
and
\begin{equation}
\ddot{z}(t)^{k} = - g^{kl} \sum_{i,j,m} \frac{\partial^{2}x_{m}}{\partial z_{i}\partial z_{j}} \frac{\partial x_{m}}{\partial z_{l}} \dot{z}^{i} \dot{z}^{j}.
\end{equation}
\end{proof}

The above expression explicitly gives the form of the geodesic accelerations and affords tractable implementations using automatic-differentiation systems, as discussed in detail in Section~\ref{sec:app_imp_geo}.

\section{EM TRAINING}
\label{sec:app-em}

We briefly explain the entire EM procedure, using as references \citet{murphy2023pml} and \citet{bishop2006prml}. We provide an outline of the general procedure of using the EM algorithm to train the model, and provide the expressions for the precise EM training procedure in our case.

Denote $f(c_{i})$ as an arbitrary set of distributions over $c$ for each $\bx_{i}$. One can write
\begin{equation}
\sum_{i=1}^{N}\log q(\bx_{i}) = \sum_{i=1}^{N}\log\left(\sum_{c_{i}} q(\bx_{i},c_{i})\right) = \sum_{i=1}^{N}\log\left(\sum_{c_{i}} f(c_{i})\frac{q(\bx_{i},c_{i})}{f(c_{i})}\right).
\end{equation}

Using Jensen's inequality, we have
\begin{equation}
\sum_{i=1}^{N}\log(\sum_{c_{i}} f(c_{i})\frac{q(\bx_{i},c_{i})}{f(c_{i})}) \geq \sum_{i=1}^{N}\sum_{c_{i}} f(c_{i}) \log\left(\frac{q(\bx_{i},c_{i})}{f(c_{i})}\right),
\end{equation}
which is the Evidence Lower BOund (ELBO). In order to achieve maximum likelihood training, we would like to maximize the above quantity.

For the E step, observe that
\begin{align}
&\quad \sum_{c_{i}} f(c_{i})\log\left(\frac{q(\bx_{i},c_{i})}{f(c_{i})}\right)= \sum_{c_{i}} f(c_{i})\left(\log \frac{q(c_{i}|\bx_{i})q(\bx_{i})}{f(c_{i})}\right)\\
&= \sum_{c_{i}} f(c_{i})\left(\log \frac{q(c_{i}|\bx_{i})}{f(c_{i})}\right) + \sum_{c_{i}} f(c_{i})\log q(\bx_{i})
= -\text{KL}\left(f(c_{i})\vert q(c_{i}|\bx_{n})\right) + \log q(\bx_{i}).
\end{align}
As such, we can set $f(c_{i})$ to $q(c_{i}|\bx_{i})$ to maximize it. Given $q(c_{i})$ and $q(\bx_{i}|c_{i})$, $q(c_{i}|\bx_{i})$ can be obtained in closed-form using the Bayes rule as
\begin{equation}
q(c_{i}|\bx_{i}) = \frac{q(\bx_{i}|c_{i})q(c_{i})}{q(\bx_{i})} = \frac{q(\bx_{i}|c_{i})q(c_{i})}{\sum_{c=1}^{C} q(\bx_{i}|c)q(c)}.
\end{equation}
Note that $f(c_{i})$ thus has an intuitive interpretation of \textit{responsibilities} of the charts on the point $\bx_{i}$. We thus refer to it accordingly, and denote it as $r_{c}(\bx_{i})$.

For the M step, we use $r_{c}(\bx_{i})$ as derived in the E step and optimize for the best $q(\bx_{i}|c)$ (and possibly $q(c)$). In other words, we now need to maximize the following
\begin{equation}
\sum_{i=1}^{N}\sum_{c=1}^{C} r_{c}(\bx_{i})(\bx_{i}) \log\left(q(\bx_{i},c)\right) = \sum_{i=1}^{N}\sum_{c=1}^{C}r_{c}(\bx_{i})\log q(\bx_{i}|c) + \sum_{i=1}^{N}\sum_{c=1}^{C}r_{c}(\bx_{i})\log q(c).
\end{equation}
Observe that the optimization for $q(c)$ can be carried out in closed-form, while the optimization for $q(\bx_{i}|c)$ affords standard gradient based training.

In the specific case of our multi-chart flows, we directly fix $q(c)$ to be uniform over the charts, and train the individual normalizing flows to model $q(\bx_{i}|c)$. However, since we would like to enforce the flows to perform reconstructions as well, following \citet{Caterini2021rectangular} we additionally add a term penalizing the reconstruction error, resulting in
\begin{equation}
\log \tilde{q}_{\btheta}(\bx_{i}|c) = \log q_{\btheta}(\bx_{i}|c) + \lambda \norm{\bx_{i} - \tilde{\bh_{c}} \left(\tilde{\bh_{c}}^{\dagger}\left(\bx_{i}\right)\right)}^{2}.
\end{equation}
As noted by \citet{Brehmer2020mflows}, depending on the problem, one can add additional terms, e.g. a term that penalizes the hidden variables $\tilde{\bh_{c}}^{\dagger}(\bx_{i})$.

To summarize, when training the mixture model using EM, we alternate between the E step, which sets $r_{c}(\bx_{i})$ based on the flows' log-density evaluations, and the M step, which trains each individual flow based on its responsibilities of the data points.

Additionally, sometimes it is beneficial to include a term penalizing the deviation of the mean responsibilities of the charts averaged over the data points. Since we set $q(c)$ to be uniform over the charts, we expect the mean responsibilities to be close to uniform as well especially when the batch size is larger. We therefore add a loss term in the form of
\begin{equation}
\lambda \sum_{c=1}^{C} \left( \frac{\sum_{i=1}^{N}r_{i}}{N} - \frac{1}{N} \right)^{2}.
\label{eq:resp-regularizer}
\end{equation}
We remark that it has been demonstrated useful to add regularization terms in EM in general \citep{li2005regularized,houdouin2023regularized}.

\subsection{EM and direct MLE}

Both EM and direct MLE are in principle applicable for training multi-chart flows, and we could also focus on MLE and it works roughly as well. We do not see the specific choice of the training strategy as a main contribution or a key choice, which is exactly why we described both alternatives. In our work, we prefer EM primarily because it directly provides the responsibilities, an interpretable quantity that can be be used to monitor training and develop practical algorithms for solving for the exponential maps and logarithmic maps. Furthermore, it enables the regularizer given by Equation~\eqref{eq:resp-regularizer}. This regularizer encourages the different charts to contribute equally to the data points, and helps to prevent one chart having essentially zero responsible data points, a pathology that may appear when training unregularized models. We provide ablation studies on models trained using EM and direct MLE in Section~\ref{sec:app-em-mle-exps}, verifying that they yield similar performances.

\section{GEOMETRY}

\subsection{Explanations Concerning the Geometry}

Consider a $d$ dimensional surface $\bS$ that is parameterized by a map $\bbf: \bZ \subset \R^{d} \rightarrow \R^{D}$, where $D$ is the ambient dimension and $\bZ$ is the parameter space. Under proper smoothness conditions, the pullback Riemannian metric $\bG(\bz) = \bJ_{\bbf(\bz)}^{\top} \bJ_{\bbf(\bz)} \in \R^{d \times d}_{\succ 0}$ is a positive definite matrix that captures the intrinsic geometry of the surface in the parameter space. For example, it is possible to compute shortest paths in $\bZ$ while respecting the geometry of $\bS$. When the topology of $\bS$ is more intrincate, i.e. not homeomorphic to Euclidean, we need multiple maps, where each of them parameterizes a neighborhood on the surface. Similarly, in our model, each normalizing flow induces a Riemannian metric in the associated latent space.

\subsection{Details on the Geometric Operations}
\label{sec:app-geo_details}

Consider $\tilde{\bh}$, $\tilde{\bh}^{\dagger}$, $\bx$ and $\bz$ as defined in the main paper. For points that lie precisely on the learned manifold, $\tilde{\bh}$ and $\tilde{\bh}^{\dagger}$ are a pair of bijective functions that are inverses of each other.

Given a value of $\bz$, one can obtain the unique corresponding $\bx = \tilde{\bh}(\bz)$. The pullback metric can also be tractably evaluated as $\bJ_{\tilde{\bh}}^{\top}\bJ_{\tilde{\bh}}$. Given $\bbv_{\bz}$, one can also obtain the corresponding $\bbv_{\bx}$ using the formula $\bbv_{\bx} = \frac{\partial\tilde{\bh}}{\partial\bz}\bbv_{\bz}$. One way to see that is to note that $\bbv_{\bx} = \frac{\partial \bx}{\partial t} = \frac{\partial \bx}{\partial \bz} \frac{\partial \bz}{\partial t} = \frac{\partial \bx}{\partial \bz} \bbv_{\bz}$. These operations are generally well-defined.

However, due to the degeneracy, multiple $\bx$ can correspond to the same $\bz$. $\tilde{\bh}^{\dagger}$ acts as a projection operation that projects data points onto the manifold, thus naturally induces quotient structures. When $\bx$ lies exactly on the learned manifold and $\bbv_{\bx}$ lies exactly on the tangent space, one can obtain the corresponding $\bbv_{\bz}$ using $\bbv_{\bz} = \frac{\partial\tilde{\bh}^{\dagger}}{\partial\bx}\bbv_{\bx}$. Due to the way that $\tilde{\bh}^{\dagger}$ is composed, we have
\begin{equation}
\frac{\partial\tilde{\bh}^{\dagger}}{\partial\bx} = \frac{\partial\left(\text{Proj}\cdot \bh^{-1}\right)}{\partial\bx} = \frac{\partial\text{Proj}}{\partial\bh^{-1}(\bx)} \frac{\partial \bh^{-1}}{\partial\bx}.
\end{equation}
Projection is a linear operation by definition, and its Jacobian matrix is simply given by a rectangular matrix with ones at the entries where the coordinates are retained. As such, when two distinct values $\bx_{1}$ and $\bx_{2}$ correspond to the same $\bz$, the Jacobian matrices are the same when the corresponding rows of $\frac{\partial\bh^{-1}}{\partial\bx_{1}}$ and $\frac{\partial\bh^{-1}}{\partial\bx_{2}}$ are the same. 

In the case of \mult, the above formulas also enable the definition of a transition operator that moves between charts. Specifically, given the latent coordinates of a point in the $i$th chart $\bz_{i}$, its corresponding representation in the $j$th chart is given by $\bz_{j} = \tilde{\bh}^{\dagger}_{j} \cdot \tilde{\bh}_{i}(\bz_{i})$.

\subsection{Implementing the Geodesic Equation}
\label{sec:app_imp_geo}

Here we discuss how to implement an efficient and numerically stable version of the geodesic equation using modern automatic-differentiation (AD) systems.

Recall the form of the geodesic equation; at each step we need to compute $- g^{kl} \sum_{i,j,m} \frac{\partial^{2}x_{m}}{\partial z_{i}\partial z_{j}} \frac{\partial x_{m}}{\partial z_{l}} \dot{z}^{i} \dot{z}^{j}$. One can divide it into three parts: the inverse of the Riemannian metric $g^{kl}$, the Jacobian $\frac{\partial x_{m}}{\partial z_{l}}$ and the quadratic form $\frac{\partial^{2}x_{m}}{\partial z_{i}\partial z_{j}} \dot{z}^{i} \dot{z}^{j}$. Noting that the Riemannian metric is precisely given by the outer product of the Jacobian, we only need to obtain the Jacobian and the quadratic form using AD.

\paragraph{Jacobian.} In general, for a function $\bbf: \R^{n} \rightarrow \R^{m}$, the cost to obtain its Jacobian is $k n$ for forward mode AD and $k m$ for reverse mode AD, where k is some constant dependent on the function \citep{baydin2018ad}. In our case, we need to obtain the Jacobian of $\bh$, which maps from $\R^{d}$ to $\R^{D}$, with $d < D$. As such, one may prefer forward mode AD. In practice, the batched Jacobian matrices can be implemented by a vectorization of the function that obtains the individual Jacobian over the batch, where for each sample we obtain the Jacobian matrix using forward mode AD.

\paragraph{Quadratic Form.} 

Forward mode AD step efficiently calculates Jacobian-vector product \citep{baydin2018ad}. As such, one can first calculate $\frac{\partial x_{m}}{\partial z_{i}}\dot{z}^{i}$, and then calculate $\frac{\partial^{2}x_{m}}{\partial z_{i}\partial z_{j}} \dot{z}^{i} \dot{z}^{j}$, employing Jacobian-vector product in each. This is to some extent known in the AD community; see e.g. \url{https://github.com/jax-ml/jax/discussions/8456#discussioncomment-1586157}.

We remark that we do not claim to have the most efficient implementation, as the focus of the paper is on obtaining the correct geometric quantities instead of having the fastest speed in doing so.

\subsection{Complexities of Solving for Geometric Quantities}

\paragraph{Exponential Maps}
For \singlem\ and \single, solving the exponential maps directly corresponds to solving for the geodesic equation as an initial value problem. In general, this operation has complexity $O(d^{3})$, due to the matrix inversion, and each step involves computing the Jacobian matrices of the flows. However, $d$ is generally small, and the Jacobians can be computed reasonably efficiently, e.g. in PyTorch \citep{ansel2024pytorch}, via vectorized operations.

A crude analysis of the computational complexity of \mult\ roughly involves solving $C$ such exponential maps for \single\ and \single, since it needs to average over the predictions generated by the individual charts. However, for each position the number of responsible charts can be small in practice, and it is largely efficient.

\paragraph{Geodesics and Logarithmic Maps}
For \singlem\ and \single, obtaining the geodesics and logarithmic maps amounts to optimizing the spline in $d$ dimensional latent space, where in each step the flows need to map points from the latent space to the ambient space.

\mult\ involves more computations, due to the need of optimizing a spline in the ambient $D$ dimensional space. In each step one needs to reconstruct the spline using the model making use of responsibilities.

\subsection{Connections to Gaussian Mixture Models (GMMs)}

A Gaussian Mixture Model (GMM) constructs a mixture of Gaussian distributions, which can be expressive when the number of mixture components is large enough. The mixture formulation is similar to our proposed mixture of flows formulation. However, in our case each individual component is a normalizing flow, which is naturally more flexible than a Gaussian distribution. In general, in our scenario there are two ways to employ a GMM for our purpose, 1. using a GMM in the ambient space and 2. using a GMM directly on the Riemannian manifold.

For the first interpretation, a GMM approximates the data density by fitting multivariate normal distributions to the data, and it is ideal if the data indeed follows such a structure. However, when data possesses a nonlinear manifold structure, a standard GMM is suboptimal. For instance, when data is uniformly distributed on a sphere, the GMM may have trouble capturing the nonlinear structure of the underlying manifold, even when the number of mixture components is large. First, since the dimensionality of the data manifold is $2$, the covariances of each individual component must be low-rank. Second, our main goal is not to approximate the data density, but instead to learn a parameterization of the underlying Riemannian manifold. We can think of each component of the GMM as a linear map $\bbf_{i}(\bx) = \bmu_{i} + \bA_{i}\bz$, where $\bA_{i}$ is the decomposition of the covariance $\boldsymbol{\Sigma}_{i}$ and $\bz \sim \mathcal{N}(\mathbf{0}, \boldsymbol{I}_{2})$. Clearly, this parameterization is not able to capture the nonlinear geometry of the sphere.

Instead, when using our approach on the same example, motivated by differential geometry, local neighborhoods of the sphere are parameterized using nonlinear bijective functions $\bbf_{i}: \R^{2} \rightarrow \R^{3}$. In order to learn such local parameterizations from data, we leverage rectangular normalizing flows. Unlike Gaussians, these functions can learn the nonlinear structure of the underying manifold. In order to learn the functions, we rely on a probabilistic approach because this allows the model to learn how to separate the local neighborhoods without explicit supervision, through adapting the responsibilities of the mixture model.

To summarize, our goal is to learn a collection of nonlinear bijections $\bbf_{i}: \R^{d} \rightarrow \R^{D}$, where each of them parameterizes a local neighborhood of a nonlinear manifold which resembles the data. We utilize a mixture model to enable the functions to learn the neighborhoods in an unsupervised fashion based on the responsibilities. This can be interpreted as similar to a GMM, where the multivariate normal distributions are replaced by normalizing flows. However, we want to highlight that our main goal is to learn the parameterization of the underlying manifold, instead of performing data density estimation in the ambient space.

In terms of the second interpretation, our method is similar to a GMM defined on a given Riemannian manifold. Indeed, there are extensions of the classic GMMs to nonlinear manifolds, using as building blocks the extensions of Gaussian distributions to Riemannian manifolds \citep{pennec2006intrinsic}. However, in these approaches the underlying Riemannian structure is assumed known beforehand, e.g. we know the data lies on a sphere. Instead, our work focuses on actually learning the unknown underlying Riemmannian manifold from data. With our approach, it is possible to learn the manifold structure, and we also provide the numerical tools that enable one to perform downstream tasks, e.g. statistical analysis. Note that normalizing flows are capable of falling back to GMMs when the target distribution is indeed given by a GMM.

\subsection{Connections to Resampling Base Distributions}

\citet{stimper2022resampling} proposed changing the latent prior while using a single normalizing flow to model the data. This implies that the latent space must have the same dimensionality as the data space, as such, we cannot cover the data manifold in a differential geometric sense.

For example, consider data lying on a sphere. The learned latent prior could also be a sphere, which captures the topology of the data manifold and implies that the trainded normalizing flow will approximately be an identity mapping. As such, the flow will not induce a useful pullback metric for computing geodesics. It might be that we can compute paths that follow high density regions, but these are not necessarily geodesics. In addition, we cannot obtain a $2$ dimensional latent space.

To summarize, we believe that \citet{stimper2022resampling} can serve as a surrogate model or as a pre-processing step for our multi-chart flows formulation.

\section{EXPERIMENTAL DETAILS}
\label{sec:app-exp-details}

For open source libraries we provide the license information inside brackets after the citations.

We mainly use PyTorch \citep{ansel2024pytorch} (custom license), NumPy \citep{harris2020numpy} (custom license) and SciPy \citep{virtanen2020scipy} (BSD-3 license) for the experiments, with the normalizing flows implementations largely based on \citet{stimper2023normflows} (MIT license). We use Scikit-TDA \citep{nathaniel2019scikit-tda} (MIT license) to plot the persistence diagrams.

For all models, we tune the learning rates among $[1e-4,3e-4,1e-3]$, and tune the reconstruction factors among $[100, 1000, 10000]$. We tune the hyperparameters using a single run for each configuration with seed $1$, then for the best configuration run two additional runs using as seeds $2$ and $3$ and report the results obtained using all three runs. For \singlem, in the first half of all epochs the model is trained solely based on reconstruction errors as induced by $\bh$, while in the second half of all epochs the model is trained solely based on log-probabilities as induced by $\bg$.

For \single\ and \mult, we use early stopping solely in terms of the reconstruction loss; for \singlem, in the manifold learning phase we use early stopping in terms of the reconstruction loss, while in the density estimation phase we use early stopping in terms of $\W$. This may lead to some advantages for \singlem\ in terms of $\W$.

For sphere, torus and triangular meshes experiments, we use RealNVP flows \citep{dinh2017realnvp}, where the MLPs use Tanh activation, and \textit{double} data type. We remark that it is important to use a smooth activation function when intending to solve the IVP, due to the need to differentiate through the flow. We sample a total of $12000$ data points, and use $25\%$ as the validation set and the remaining as the train set. We train all models for a maximum of $1000$ epochs, using a batch size of $256$. For \mult, $200$ epochs are used for pretraining. The pretraining datasets for the charts are based on K-Means clusters obtained through Scikit-Learn \citep{pedregosa2011sklearn} (BSD-3-Clause license). We use early stopping with patience $50$, starting validation after the model has been trained for $100$ epochs in the current stage. We use some separate $10000$ data points as the test set.

For \single\ and \mult, we use as validation loss the reconstruction loss on the val set. For \singlem, we use sequential training, where the first phase uses as validation loss the reconstruction loss, and the second phase uses as validation loss the Wasserstein distance.

For solving the exponential maps, most numerical integrations are carried out using \textit{scipy.integrate.solve\_ivp} from SciPy, where we use the default Dopri-$5$ solver \citep{dormand1980dopri,shampine1986practical} with $rtol=1e-3$ and $atol=1e-6$. For solving the logarithmic maps, our implementation is largely based on Stochman \citep{detlefsen2021stochman} (Apache-2.0 license), where we optimize a parameterized spline using gradient descent.

We use Python Optimal Transport \citep{flamary2021pot,flamary2024pot} (MIT license) to calculate the Wasserstein distances between batches of data and batches of samples drawn using the models, where the subsamplings are performed for $5$ times and each batch has $1024$ samples.

\subsection{Error Bars}

For all visualizations where an error bar is shown, the bar is plotted as mean $\pm$ $3$ times the standard deviation.

The reported means and standard deviations are obtained using multiple runs with different random seeds. Experimental results as reported in the main paper use $10$ independent runs unless otherwise stated, while the experimental results as reported in the Appendix generally use $3$ independent runs.

\subsection{Sphere and Torus}
\label{sec:app-sphere-torus}

We consider four distributions on the two dimensional sphere and the two dimensional torus. Specifically, they are: uniform distribution on the sphere (Sphere-U), mixture of von-Mises Fisher (vMF) distribution on the sphere (Sphere-M), uniform distribution on the torus (Torus-U) and mixture of Bivariate von-Mishes (BvM) distribution on the torus (Torus-M).

The datasets are generated mainly using as tools Geomstats \citep{miolane2020geomstats_jmlr,miolane2024geomstats_software} (MIT license) and Pyro \citep{bingham2019pyro} (Apache-2.0 license).

For the mixture of vMF distribution, we use four components, with the means given by $[\frac{1}{\sqrt{3}}, \frac{1}{\sqrt{3}}, \frac{1}{\sqrt{3}}], [-\frac{1}{\sqrt{3}}, -\frac{1}{\sqrt{3}}, \frac{1.0}{\sqrt{3}}], [-\frac{1}{\sqrt{3}}, \frac{1}{\sqrt{3}}, -\frac{1}{\sqrt{3}}], [\frac{1}{\sqrt{3}}, -\frac{1}{\sqrt{3}}, -\frac{1}{\sqrt{3}}]$ and $\kappa=5$. 

For the mixture of BvM distribution, we use four components specified in angular coordinates. The means are given by $[0, 0], [\pi, 0], [0, \pi], [\pi, \pi]$. Each component has concentration $1$, with the correlation between the two dimensions $0$.

When performing evaluations of the exponential maps, the logarithmic maps and distances, based on preliminary experiments, we observe that it is beneficial to choose the evaluation points as distant to each other as possible to make the resulting persistence diagrams clear. As such, on the sphere, we use Fibonacci lattice as implemented by \citet{brinkman2025fiblat} (MIT license). On the torus, since it is the product manifold formed by two circles, we analytically draw samples on the circles using angular coordinates and form the final samples using Cartesian products. For exponential maps, we use $100$ data points, resulting in $4950$ evaluations, where we solve $128$ maps in a batch. For logarithmic maps and distances we use $225$ data points, resulting in $25200$ evaluations, where we solve $256$ maps in a batch. Note that the ground truth distance of $\bx_{1}$ to $\bx_{2}$ is naturally the same as the ground truth distance of $\bx_{2}$ to $\bx_{1}$, and the ground truth distance of $\bx_{1}$ to itself is naturally $0$. The exponential maps and logarithmic maps are evaluated based on the pairs determined by \textit{tril\_indices} function in NumPy and PyTorch with offset $-1$.

\subsection{Triangular Meshes}

The triangular meshes experiments are largely based on Trimesh \citep{dawson2025trimesh} (MIT license). We use \textit{7\_8ths\_cube.stl}, referred to as $7$-$8$ cube, \textit{busted.STL}, referred to as Busted, and \textit{rock.obj.bz2}, referred to as Rock, as provided by Trimesh \citep{dawson2025trimesh}, while refining the meshes partly to make them smooth and the resulting distance approximations reasonable. Additionally, the meshes are normalized.

For the evaluation of distances, we use $100$ data points. Following the reasoning in Section~\ref{sec:app-sphere-torus}, there are a total of $4950$ distance evaluations to be carried out, with each batch solving $256$ evaluations. Due to the lack of a principled approach to draw maximally distant samples on the meshes, the data points are simply drawn uniformly. Following \citet{dawson2025trimesh}, each \textit{ground truth} distance between two points is approximated as the shortest distance between the closest vertices to the points on the graph induced by the mesh and solved using NetworkX \citep{hagberg2008networkx} (3-clause BSD license).

\subsection{Motion Capture Data}

We use the initial frame from \textit{dance1} as provided by MocapToolbox \citep{burger2013mocap} (GNU general public license). We construct two variants of Mocap datasets, \textit{Mocap1} and \textit{Mocap3}.

In Mocap1, the initial frame is randomly rotated around the axis given by $[1 / \sqrt{14}, 2/\sqrt{14}, 3/\sqrt{14}]$, with the rotation angle uniformly sampled between $0$ and $2\pi$. We use $30000$ points as the train set, $10000$ points as the val set and $10000$ points as the test set. In Mocap3, the initial frame is rotated using a random element from SO(3). We use $50000$ points as the train set, $10000$ points as the val set and $10000$ points as the test set. For both datasets, we standardize the data based on the train set. In Mocap1 the batch size is $256$, while for Mocap3 in preliminary experiments we observe that using a larger batch size stabilizes training especially for \mult, so we employ a batch size of $1024$.

We use Neural Spline flows \citep{durkan2019nsf} with ReLU activation and \textit{float} data type. For pretraining as in \mult, we use Constrained K-Means clustering \citep{bennett2000constrained} as implemented in \citet{levy-kramer2018k-means-constrained} (BSD-3-Clause license) to obtain the clusters, where the clusters are constrained to be between $0.9t$ and $1.1t$, where $t$ is the target size given by dividing the total number of samples by the number of charts. We use early stopping with infinite patience and perform validations throughout the process. For Mocap1, \single\ is trained for the first $20$ epochs solely based on reconstructions, while during pretraining of \mult\ which has a total of $30$ epochs the first $20$ epochs are solely based on reconstructions and the following $10$ epochs also account for log-probabilities. For Mocap3, the first $50$ epochs of \single\ are solely based on reconstructions, while for \mult\, among the $70$ epochs for pretraining the first $50$ are solely based on reconstructions and the later $20$ epochs also account for log-probabilities. Following \citet{Brehmer2020mflows}, a loss term with weight $1e-3$ is added to constrain the latent representations. For \mult, an additional loss term with weight $10$ is added to encourage the mean responsibilities of the charts to be more uniform.

In Mocap1, the ground truth latent space is by definition $1$. As such, we perform an experiment where we calculate the pairwise distance between $5$ rotations of the original frame whose angles are uniformly distributed, solving one distance in each batch. While it is not clear whether the ground truth distances should be the same for these pairs, we should expect them to reflect the circular nature of the latent space.

\subsection{Compute Resources}

We mainly use a compute cluster to perform the experiments. The CPU is Intel Xeon Gold 6230 with 192GB memory, the GPU is Nvidia Volta V100 with 32GB memory.

Generally, for each CPU job we use $4$ CPU cores, while each GPU job uses one GPU and $10$ CPU cores. 
For jobs that involved training the flows, for Mocap datasets the jobs are run on GPU, otherwise they were run on CPU. The running times of the jobs may vary, but are generally within several hours. \singlem\ generally results in faster training than \single\ and \mult. Experiments involving evaluating the geometric and topological quantities were run on CPU, whose running times may vary and may need more than a day for those with ill-defined geometries. Some preliminary experiments were run, whose results did not make it to the paper, which naturally took up additional compute resources. The total amount of compute used is thus most likely thousands of CPU hours and hundreds of GPU hours.

\section{ADDITIONAL EXPERIMENTAL RESULTS}

In the result tables, we use Recons to denote the reconstruction errors, $\W$ to denote the Wasserstein distances, Exps to denote the qualities of exponential maps, Logs to denote the qualitities of logarithmic maps and Dists to denote the qualities of distances.

\subsection{Different Exponential Map Solvers for Multi-Chart Flows}
\label{sec:app-exp-ints}

Algorithm 1 in the main paper introduces one algorithm for solving for exponential maps based on multi-chart flows. We refer to that algorithm as \texttt{Euler}, and discuss two alternative algorithms, \texttt{Hard\_switch} and \texttt{Ambient}.

\paragraph{Hard\_switch.}

\begin{algorithm}[tb]
\caption{Hard\_switch algorithm for solving the exponential maps with a total of $T$ steps. $\text{Exp}_{c}$ denotes the exponential map induced by the $c$th flow.}
\While{$\arg\max\left(\text{resps}\left(\bx_{t}\right)\right) = c$ \text{and} $t<T$}{
    $\bx_{t+1} \gets \text{Exp}_{c}\left(\bx_{t},\bbv_{t}\right)$\;
    $t = t+1$\;
}
\label{alg:mult-exp-hard-switch}
\end{algorithm}

The naive algorithm for solving the exponential maps involves following along the geodesic in one chart at a time, and is presented in Algorithm~\ref{alg:mult-exp-hard-switch}. We refer to this algorithm as \textit{Hard\_switch}. In terms of numerical implementations, this can be achieved by specifying an event inside an ODE solver that terminates the current integration when the most responsible chart changes and jumps to another chart upon the event happens. With \textit{scipy.integrate.solve\_ivp}, we can specify the event in two ways: it can be a \textit{discrete} event checking whether the most responsible chart is the current one or a \textit{continuous} event checking the value of the difference between responsibilities of the current chart and the second most responsible one. 

\paragraph{Ambient.}

In the Euler algorithm, there is a hyperparameter $T$ that needs to be tuned. One interesting question to be asked is, what if we take the infinite limit? This question leads to an alternative algorithm where one averages over the velocities and accelerations, so as to directly perform the integrations of the geodesics in the ambient space. We derive the acceleration of geodesics when viewed in the ambient space.

\begin{theorem}
The acceleration of a geodesic can be expressed in the ambient space as
\begin{equation}
\ba_{\bX} = \frac{\partial\bJ}{\partial\bz}\bbv_{\bZ}\bbv_{\bZ} + \bJ \ba_{\bZ},
\end{equation}
where $\ba_{\bZ}$ is the acceleration as calculated in Theorem~\ref{thm:single-geodesic}.
\end{theorem}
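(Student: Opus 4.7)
The plan is to obtain the expression by differentiating the ambient velocity with respect to time, treating the geodesic as a curve $\bz(t)$ in the latent space whose image $\bx(t) = \tilde{\bh}(\bz(t))$ lives in the ambient space. This is a direct chain-rule / product-rule computation; there is no deep obstacle.

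First I would recall that, since $\bx(t) = \tilde{\bh}(\bz(t))$, the chain rule gives the ambient velocity
\begin{equation}
\bbv_{\bX} \;=\; \dot{\bx} \;=\; \frac{\partial \tilde{\bh}}{\partial \bz}\,\dot{\bz} \;=\; \bJ\,\bbv_{\bZ},
\end{equation}
where $\bJ = \partial \bx / \partial \bz$ is the $D\times d$ Jacobian already used to define the pullback metric in \eqref{eq:riemannian_metric}. This is precisely the Jacobian-vector product mapping discussed in Section~\ref{sec:supp_imp_geo}.

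Next I would differentiate once more in time and apply the product rule:
\begin{equation}
\ba_{\bX} \;=\; \ddot{\bx} \;=\; \frac{d}{dt}\!\bigl(\bJ\,\bbv_{\bZ}\bigr) \;=\; \dot{\bJ}\,\bbv_{\bZ} \;+\; \bJ\,\dot{\bbv}_{\bZ}.
\end{equation}
The second term is $\bJ\,\ba_{\bZ}$ by definition of the latent acceleration. For the first term, $\bJ$ depends on time only through $\bz(t)$, so by the chain rule
\begin{equation}
\dot{\bJ} \;=\; \frac{\partial \bJ}{\partial \bz}\,\dot{\bz} \;=\; \frac{\partial \bJ}{\partial \bz}\,\bbv_{\bZ},
\end{equation}
and substituting yields exactly $\ba_{\bX} = \frac{\partial \bJ}{\partial \bz}\,\bbv_{\bZ}\bbv_{\bZ} + \bJ\,\ba_{\bZ}$, as claimed.

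The only subtlety worth flagging is notational: the object $\frac{\partial \bJ}{\partial \bz}\bbv_{\bZ}\bbv_{\bZ}$ is a third-order tensor contracted twice with $\bbv_{\bZ}$, whose $m$-th component is $\sum_{i,j}\frac{\partial^{2} x_{m}}{\partial z_{i}\partial z_{j}}\dot{z}^{i}\dot{z}^{j}$; in index form the identity reads $a^{m}_{\bX} = \sum_{i,j}\frac{\partial^{2}x_{m}}{\partial z_{i}\partial z_{j}}\dot{z}^{i}\dot{z}^{j} + \sum_{k}\frac{\partial x_{m}}{\partial z_{k}} a^{k}_{\bZ}$. Once this bookkeeping is made explicit, there is no genuine obstacle; the statement is a purely kinematic consequence of $\bx = \tilde{\bh}(\bz)$ and does not use that the curve is a geodesic. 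The latter fact is useful in practice, because $\ba_{\bZ}$ can then be replaced by the right-hand side of Theorem~\ref{thm:single-geodesic}, giving a closed-form ambient-space acceleration directly amenable to the averaging over charts used in the \texttt{Ambient} integrator.
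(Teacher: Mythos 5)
Your proof is correct and follows essentially the same route as the paper: start from $\bbv_{\bX} = \bJ\bbv_{\bZ}$, differentiate in time using the product rule, and expand $\dot{\bJ}$ via the chain rule through $\bz(t)$. Your added remarks — the index-form clarification of the tensor contraction and the observation that the identity is purely kinematic and does not actually use the geodesic property — are accurate and a useful supplement, but the core argument is identical.
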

\begin{proof}
Recall that the velocity in the latent space and the velocity in the ambient space are related by $\bbv_{\bX} = \bJ \bbv_{\bZ}$. Taking gradient with respect to $t$ on both sides, we have
\begin{equation}
\ba_{\bX} = \frac{\partial \left( \bJ \bbv_{\bZ} \right)}{\partial t} = \frac{\partial \bJ}{\partial t} \bbv_{\bZ} + \bJ \ba_{\bZ} = 
\frac{\partial \bJ}{\partial \bz} \bbv_{\bZ} \bbv_{\bZ} + \bJ \ba_{\bZ}.
\end{equation}
\end{proof}

\begin{algorithm}[tb]
\caption{Ambient algorithm for solving the exponential maps with a total of $T$ steps. $\text{Geo}_{c}$ denotes the function that takes in position and velocity and returns velocity and acceleration of the geodesic induced by the $c$th flow. $\text{Integral}_{\Delta t}$ denotes integrating the ODE for time $\Delta t$.}
\For{$t \gets 0$ \KwTo $T-1$}{
    $[\bbc , \br] \gets \text{filter}\left(\text{resps}\left(\bx_{t}\right), \text{resp\_threshold}\right)$ \Comment*[r]{\small \textnormal{// Get filtered responsibilities}}
    \For{$c$ in $\bbc$}{
        $\bbv_{t+1}^{c}, \ba_{t+1}^{c} \gets \text{Geo}_{c}\left(\bx_{t}, \bbv_{t}\right)$\;
    }
    $\bbv_{t+1} \gets \sum_{c=1}^{C}r_{c}\bbv_{t+1}^{c}$, $\ba_{t+1} \gets \sum_{c=1}^{C}r_{c}\ba_{t+1}^{c}$\;
    $\bx_{t+1},\bbv_{t+1} \gets \text{Integral}_{\Delta t}(\bx_{t},\bbv_{t},\bbv_{t+1},\ba_{t+1})$\;
}
\label{alg:mult-exp-ambient}
\end{algorithm}

Interestingly, observe that the first term also appears in Theorem~\ref{thm:single-geodesic}. Moreover, after $\ba_{\bZ}$ is calculated, we naturally have access to $\bJ$. As such, the acceleration in the ambient space can be calculated with little extra computational overhead. The resulting algorithm is outlined in Algorithm~\ref{alg:mult-exp-ambient}.

\begin{table*}
	\begin{center}
	\caption{The errors of the exponential map integrators in the form of mean $\pm$ std, lower is better. The best are highlighted in bold, and the ones with failed runs are italic. \textit{Hard\_switch} is notably worse than the other two.}
    \label{tbl:exp-integrators}
		\begin{tabular}{llll}
			\toprule
			Data & Euler & Ambient & Hard\_switch \\
			\midrule
			Sphere-U & $9.55 \!\cdot\! 10^{-3} \pm 6.31 \!\cdot\! 10^{-3}$ & $\mathbf{6.5 \!\cdot\! 10^{-3}} \pm 2.73 \!\cdot\! 10^{-3}$ & $1.39 \!\cdot\! 10^{-1} \pm 2.67 \!\cdot\! 10^{-2}$ \\
			Torus-U & $7.31 \!\cdot\! 10^{-2} \pm 1.26 \!\cdot\! 10^{-2}$ & $\mathit{3.5 \!\cdot\! 10^{-2}} \pm 3.37 \!\cdot\! 10^{-3}$ & $5.69 \!\cdot\! 10^{-1} \pm 1.05 \!\cdot\! 10^{-1}$ \\
			Sphere-M & $1.12 \!\cdot\! 10^{-2} \pm 6.33 \!\cdot\! 10^{-3}$ & $\mathbf{8.79 \!\cdot\! 10^{-3}} \pm 2.44 \!\cdot\! 10^{-4}$ & $2.24 \!\cdot\! 10^{-1} \pm 4.45 \!\cdot\! 10^{-2}$ \\
			Torus-M & $\mathbf{1.07 \!\cdot\! 10^{0}} \pm 2.39 \!\cdot\! 10^{-1}$ & $\mathit{2.14 \!\cdot\! 10^{16}} \pm 2.14 \!\cdot\! 10^{16}$ & $1.87 \!\cdot\! 10^{0} \pm 1.63 \!\cdot\! 10^{-1}$ \\
			\bottomrule
		\end{tabular}
	\end{center}
\end{table*}

\paragraph{Comparisons.}

We compare the \textit{discrete} variant of \textit{Hard\_switch}, \textit{Euler} and \textit{Ambient}. The results are shown in Table~\ref{tbl:exp-integrators}. In practice, we observe that Euler as outlined in Algorithm 1 offers reasonable performances and stabilities. Ambient yields good performances in some cases, but is unstable in some others. As expected, Hard\_switch consistently performs worse than Euler.

\subsection{Results on Circle}

\begin{table*}
	\begin{center}
	\caption{Evaluation metrics on circle in the form of mean $\pm$ std, lower is better. The best are highlighted in bold.}
    \label{tbl:circle-results}
		\begin{tabular}{lll}
			\toprule
			Method & Recons & $\W$ \\
			\midrule
			\singlem & $3.65 \!\cdot\! 10^{-3} \pm 3.66 \!\cdot\! 10^{-3}$ & $6.87 \!\cdot\! 10^{-1} \pm 4.3 \!\cdot\! 10^{-1}$ \\
			\single & $4.68 \!\cdot\! 10^{-3} \pm 5.63 \!\cdot\! 10^{-3}$ & $1.1 \!\cdot\! 10^{-1} \pm 2.49 \!\cdot\! 10^{-2}$ \\
			\multdirect & $7.73 \!\cdot\! 10^{-7} \pm 4.32 \!\cdot\! 10^{-7}$ & $\mathbf{8.24 \!\cdot\! 10^{-2}} \pm 2.88 \!\cdot\! 10^{-2}$ \\
			\mult & $\mathbf{5.48 \!\cdot\! 10^{-7}} \pm 1.2 \!\cdot\! 10^{-7}$ & $8.56 \!\cdot\! 10^{-2} \pm 2.79 \!\cdot\! 10^{-2}$ \\
			\bottomrule
		\end{tabular}
	\end{center}
\end{table*}

\begin{figure}
    \centering
    \includegraphics[width=0.7\textwidth]{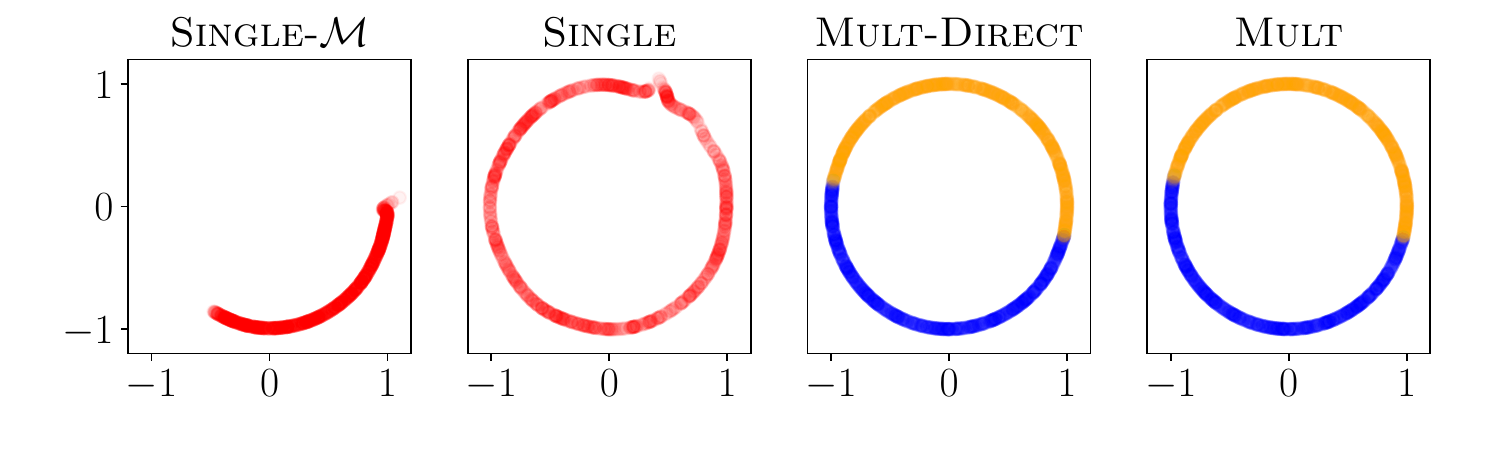}
    \caption{From left to right: \singlem, \single, \multdirect\ and \mult. While single-chart flows fail to cover the circle, multi-chart flows are able to provide samples across the circle.}
    \label{fig:circle_samples}
\end{figure}

\begin{figure}
    \centering
    \includegraphics[width=0.7\textwidth]{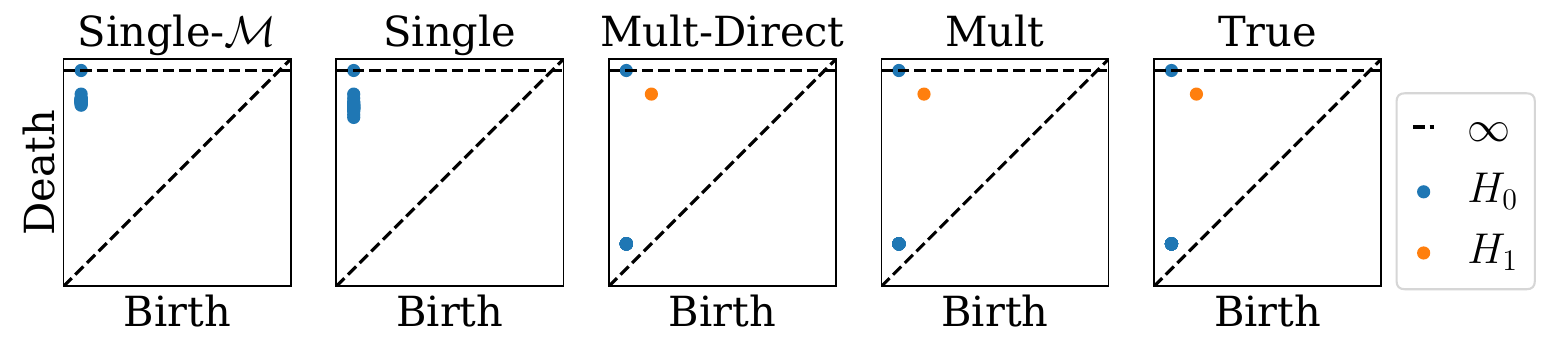}
    \caption{From left to right: example persistence diagrams on \singlem, \single, \multdirect\ and \mult. Using a single chart results in pathological failures, where the model fails to cover the entire circle. Only when using multiple charts the models learn the correct topology.}
    \label{fig:circle_dgms}
\end{figure}

\begin{figure}
    \centering
    \includegraphics[width=0.7\linewidth]{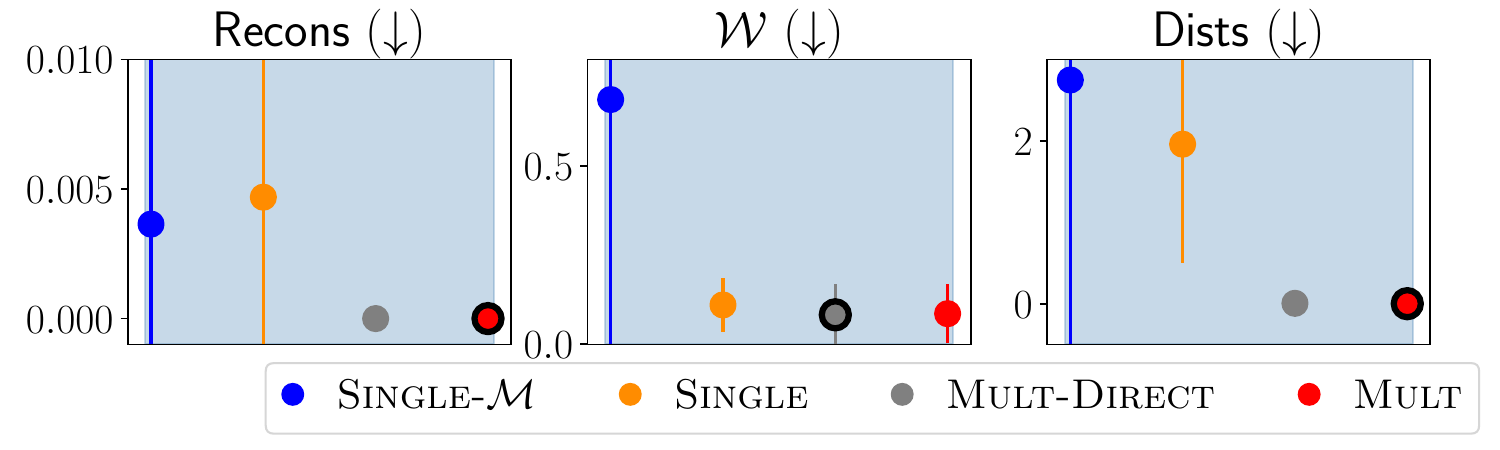}
    \caption{Evaluation metrics of different methods on Circle. Generally, \multdirect\ and \mult\ yield roughly the same level of performances while clearly outperforming \singlem\ and \single.}
    \label{fig:circle_results}
\end{figure}

The evaluation metrics of the different algorithms trained to model the uniform distribution on a circle are shown in Figure~\ref{fig:circle_results}, and the numerical results can be found in Table~\ref{tbl:circle-results}. We do not observe a qualitative difference between direct MLE and EM. We additionally report example samples generated by the models in Figure~\ref{fig:circle_samples} and example persistence diagrams computed based on the models in Figure~\ref{fig:circle_dgms}. Both \multdirect\ and \mult\ are able to capture the topology well.

\subsection{Results on Sphere and Torus}
\label{sec:app-res-sphere-torus}

We report the numerical results of the different models on sphere and torus in Table~\ref{tbl:sphere-torus-single_m-0}, Table~\ref{tbl:sphere-torus-single-0}, Table~\ref{tbl:sphere-torus-mult-0}, Table~\ref{tbl:sphere-torus-single_m-1}, Table~\ref{tbl:sphere-torus-single-1} and Table~\ref{tbl:sphere-torus-mult-1}. It is entirely possible for a geodesic solver to return a line that is quite different from the \textit{true} geodesic while still being reasonable, resulting in rather different logarithmic maps and longer distances; Figure~\ref{fig:torus-log} shows such an example. We additionally provide example persistence diagrams for Sphere-M, Torus-U and Torus-M in Figure~\ref{fig:supp_persistence_diagrams}; note that example diagrams for Sphere-U were already provided in the main paper. It is clear that \mult\ consistently provides the most faithful representations of the underlying topology.

\begin{figure*}
    \centering
    \includegraphics[width=0.3\textwidth]{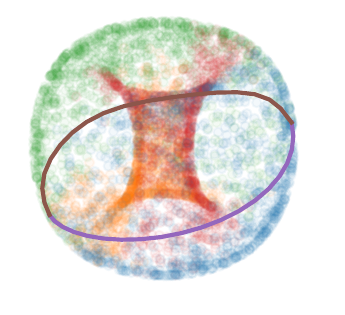}
    \caption{An example of geodesic solver failure on torus. The solver finds the path on the opposite side, resulting in a drastically different logarithmic map and a longer distance.}
    \label{fig:torus-log}
\end{figure*}

\begin{table*}
	\begin{center}
	\caption{Evaluation metrics concerning modeling of \singlem\ on sphere and torus in the form of mean $\pm$ std, lower is better.}
    \label{tbl:sphere-torus-single_m-0}
		\begin{tabular}{lll}
			\toprule
			Manifold & Recons & $\W$ \\
			\midrule
			Sphere-U & $2.49 \!\cdot\! 10^{-4} \pm 2.41 \!\cdot\! 10^{-4}$ & $1.31 \!\cdot\! 10^{-1} \pm 1.28 \!\cdot\! 10^{-2}$ \\
			Torus-U & $3.0 \!\cdot\! 10^{-2} \pm 2.79 \!\cdot\! 10^{-2}$ & $2.96 \!\cdot\! 10^{-1} \pm 4.16 \!\cdot\! 10^{-2}$ \\
			Sphere-M & $1.69 \!\cdot\! 10^{-3} \pm 4.51 \!\cdot\! 10^{-3}$ & $1.7 \!\cdot\! 10^{-1} \pm 7.34 \!\cdot\! 10^{-2}$ \\
			Torus-M & $5.39 \!\cdot\! 10^{-2} \pm 4.59 \!\cdot\! 10^{-2}$ & $9.93 \!\cdot\! 10^{-1} \pm 1.52 \!\cdot\! 10^{0}$ \\
			\bottomrule
		\end{tabular}
	\end{center}
\end{table*}

\begin{table*}
	\begin{center}
	\caption{Evaluation metrics concerning modeling of \single\ on sphere and torus in the form of mean $\pm$ std, lower is better.}
    \label{tbl:sphere-torus-single-0}
		\begin{tabular}{lll}
			\toprule
			Manifold & Recons & $\W$ \\
			\midrule
			Sphere-U & $8.2 \!\cdot\! 10^{-5} \pm 5.42 \!\cdot\! 10^{-5}$ & $1.31 \!\cdot\! 10^{0} \pm 5.7 \!\cdot\! 10^{0}$ \\
			Torus-U & $2.7 \!\cdot\! 10^{-2} \pm 2.73 \!\cdot\! 10^{-2}$ & $4.38 \!\cdot\! 10^{-1} \pm 1.51 \!\cdot\! 10^{-1}$ \\
			Sphere-M & $2.09 \!\cdot\! 10^{-4} \pm 2.06 \!\cdot\! 10^{-4}$ & $5.21 \!\cdot\! 10^{0} \pm 1.86 \!\cdot\! 10^{1}$ \\
			Torus-M & $5.28 \!\cdot\! 10^{-2} \pm 3.36 \!\cdot\! 10^{-2}$ & $1.1 \!\cdot\! 10^{0} \pm 9.57 \!\cdot\! 10^{-1}$ \\
			\bottomrule
		\end{tabular}
	\end{center}
\end{table*}

\begin{table*}
	\begin{center}
	\caption{Evaluation metrics concerning modeling of \mult\ on sphere and torus in the form of mean $\pm$ std, lower is better.}
    \label{tbl:sphere-torus-mult-0}
		\begin{tabular}{lll}
			\toprule
			Manifold & Recons & $\W$ \\
			\midrule
			Sphere-U & $1.87 \!\cdot\! 10^{-6} \pm 5.91 \!\cdot\! 10^{-7}$ & $1.28 \!\cdot\! 10^{-1} \pm 1.13 \!\cdot\! 10^{-2}$ \\
			Torus-U & $2.43 \!\cdot\! 10^{-5} \pm 7.68 \!\cdot\! 10^{-6}$ & $2.34 \!\cdot\! 10^{-1} \pm 2.39 \!\cdot\! 10^{-2}$ \\
			Sphere-M & $1.42 \!\cdot\! 10^{-6} \pm 2.44 \!\cdot\! 10^{-7}$ & $1.3 \!\cdot\! 10^{-1} \pm 1.49 \!\cdot\! 10^{-2}$ \\
			Torus-M & $6.7 \!\cdot\! 10^{-5} \pm 3.83 \!\cdot\! 10^{-5}$ & $3.36 \!\cdot\! 10^{-1} \pm 5.6 \!\cdot\! 10^{-2}$ \\
			\bottomrule
		\end{tabular}
	\end{center}
\end{table*}

\begin{table*}
	\begin{center}
	\caption{Evaluation metrics concerning geometry of \singlem\ on sphere and torus in the form of mean $\pm$ std, lower is better. The ones with failed runs are italic.}
    \label{tbl:sphere-torus-single_m-1}
		\begin{tabular}{llll}
			\toprule
			Manifold & Exps & Logs & Dists \\
			\midrule
			Sphere-U & $\mathit{5.32 \!\cdot\! 10^{12}} \pm 1.41 \!\cdot\! 10^{13}$ & $1.56 \!\cdot\! 10^{0} \pm 1.54 \!\cdot\! 10^{0}$ & $1.42 \!\cdot\! 10^{-1} \pm 2.86 \!\cdot\! 10^{-1}$ \\
			Torus-U & $\mathit{4.28 \!\cdot\! 10^{18}} \pm 1.13 \!\cdot\! 10^{19}$ & $3.04 \!\cdot\! 10^{1} \pm 8.44 \!\cdot\! 10^{0}$ & $9.74 \!\cdot\! 10^{0} \pm 5.2 \!\cdot\! 10^{0}$ \\
			Sphere-M & $7.55 \!\cdot\! 10^{9} \pm 2.26 \!\cdot\! 10^{10}$ & $4.58 \!\cdot\! 10^{3} \pm 1.37 \!\cdot\! 10^{4}$ & $4.57 \!\cdot\! 10^{3} \pm 1.37 \!\cdot\! 10^{4}$ \\
			Torus-M & $\mathit{4.93 \!\cdot\! 10^{4}} \pm 1.18 \!\cdot\! 10^{5}$ & $3.92 \!\cdot\! 10^{2} \pm 6.8 \!\cdot\! 10^{2}$ & $3.63 \!\cdot\! 10^{2} \pm 6.73 \!\cdot\! 10^{2}$ \\
			\bottomrule
		\end{tabular}
	\end{center}
\end{table*}

\begin{table*}
	\begin{center}
	\caption{Evaluation metrics concerning geometry of \single\ on sphere and torus in the form of mean $\pm$ std, lower is better. The ones with failed runs are italic.}
    \label{tbl:sphere-torus-single-1}
		\begin{tabular}{llll}
			\toprule
			Manifold & Exps & Logs & Dists \\
			\midrule
			Sphere-U & $\mathit{4.25 \!\cdot\! 10^{23}} \pm 1.2 \!\cdot\! 10^{24}$ & $1.32 \!\cdot\! 10^{0} \pm 8.96 \!\cdot\! 10^{-1}$ & $8.62 \!\cdot\! 10^{-2} \pm 7.73 \!\cdot\! 10^{-2}$ \\
			Torus-U & $5.54 \!\cdot\! 10^{17} \pm 1.66 \!\cdot\! 10^{18}$ & $3.06 \!\cdot\! 10^{1} \pm 1.46 \!\cdot\! 10^{1}$ & $1.05 \!\cdot\! 10^{1} \pm 9.45 \!\cdot\! 10^{0}$ \\
			Sphere-M & $5.46 \!\cdot\! 10^{3} \pm 1.61 \!\cdot\! 10^{4}$ & $1.63 \!\cdot\! 10^{0} \pm 1.18 \!\cdot\! 10^{0}$ & $1.21 \!\cdot\! 10^{-1} \pm 1.38 \!\cdot\! 10^{-1}$ \\
			Torus-M & $5.93 \!\cdot\! 10^{10} \pm 1.78 \!\cdot\! 10^{11}$ & $3.42 \!\cdot\! 10^{3} \pm 5.78 \!\cdot\! 10^{3}$ & $3.38 \!\cdot\! 10^{3} \pm 5.76 \!\cdot\! 10^{3}$ \\
			\bottomrule
		\end{tabular}
	\end{center}
\end{table*}

\begin{table*}
	\begin{center}
	\caption{Evaluation metrics concerning geometry of \mult\ on sphere and torus in the form of mean $\pm$ std, lower is better.}
    \label{tbl:sphere-torus-mult-1}
		\begin{tabular}{llll}
			\toprule
			Manifold & Exps & Logs & Dists \\
			\midrule
			Sphere-U & $1.39 \!\cdot\! 10^{-2} \pm 1.51 \!\cdot\! 10^{-2}$ & $1.11 \!\cdot\! 10^{-1} \pm 3.28 \!\cdot\! 10^{-2}$ & $3.34 \!\cdot\! 10^{-4} \pm 2.21 \!\cdot\! 10^{-4}$ \\
			Torus-U & $1.87 \!\cdot\! 10^{-1} \pm 1.75 \!\cdot\! 10^{-1}$ & $4.55 \!\cdot\! 10^{0} \pm 1.85 \!\cdot\! 10^{0}$ & $2.39 \!\cdot\! 10^{-1} \pm 1.82 \!\cdot\! 10^{-1}$ \\
			Sphere-M & $5.13 \!\cdot\! 10^{-3} \pm 4.51 \!\cdot\! 10^{-3}$ & $1.4 \!\cdot\! 10^{-1} \pm 4.32 \!\cdot\! 10^{-2}$ & $6.03 \!\cdot\! 10^{-4} \pm 3.16 \!\cdot\! 10^{-4}$ \\
			Torus-M & $1.94 \!\cdot\! 10^{0} \pm 2.13 \!\cdot\! 10^{0}$ & $3.19 \!\cdot\! 10^{0} \pm 1.05 \!\cdot\! 10^{0}$ & $7.1 \!\cdot\! 10^{-2} \pm 2.87 \!\cdot\! 10^{-2}$ \\
			\bottomrule
		\end{tabular}
	\end{center}
\end{table*}

\begin{figure}
    \centering
    \includegraphics[width=0.7\linewidth]{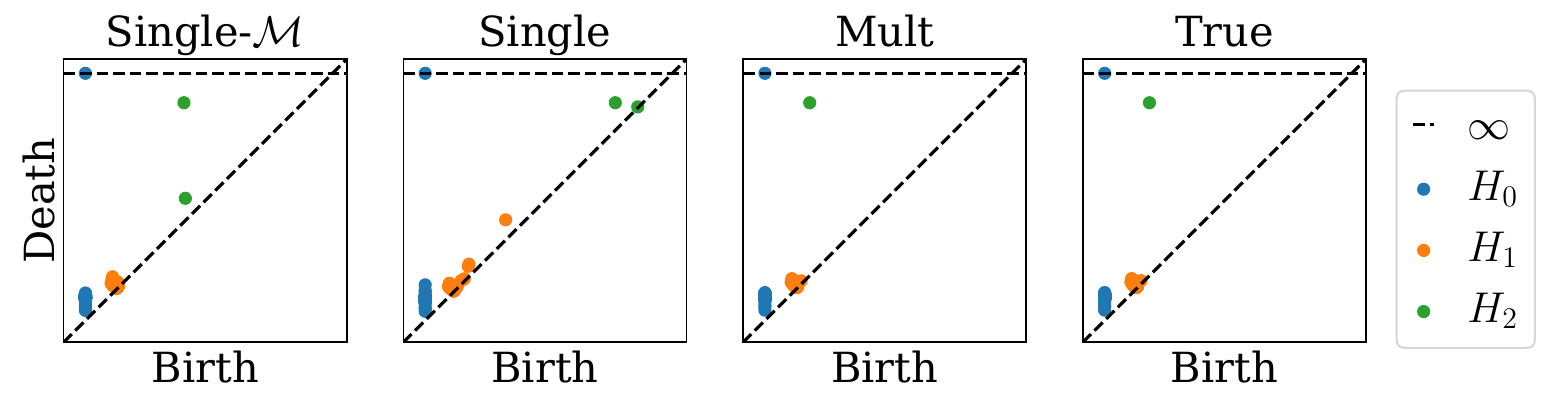} \\
    \includegraphics[width=0.7\linewidth]{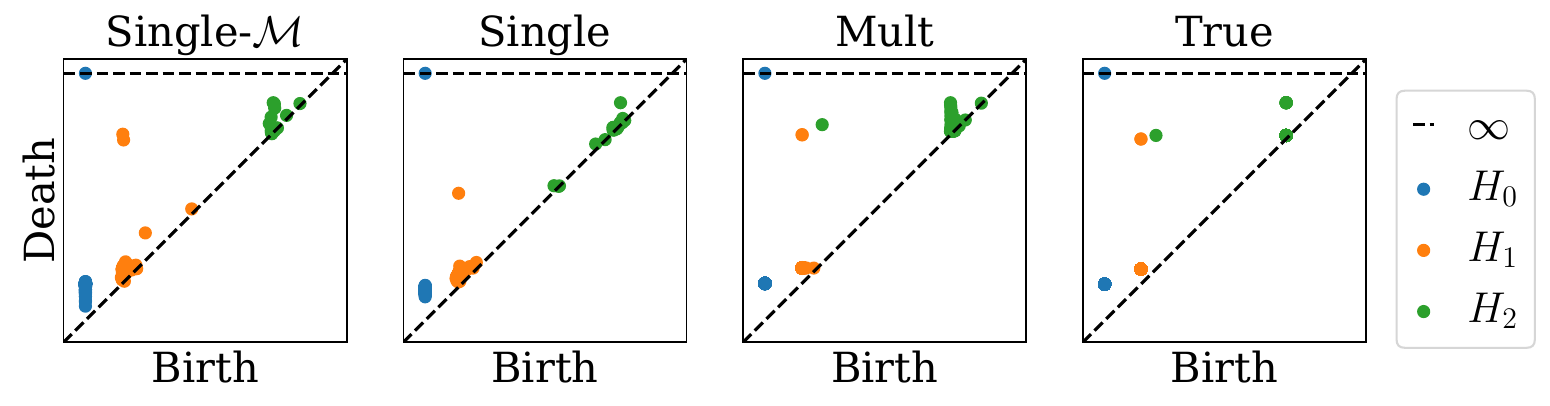} \\
    \includegraphics[width=0.7\linewidth]{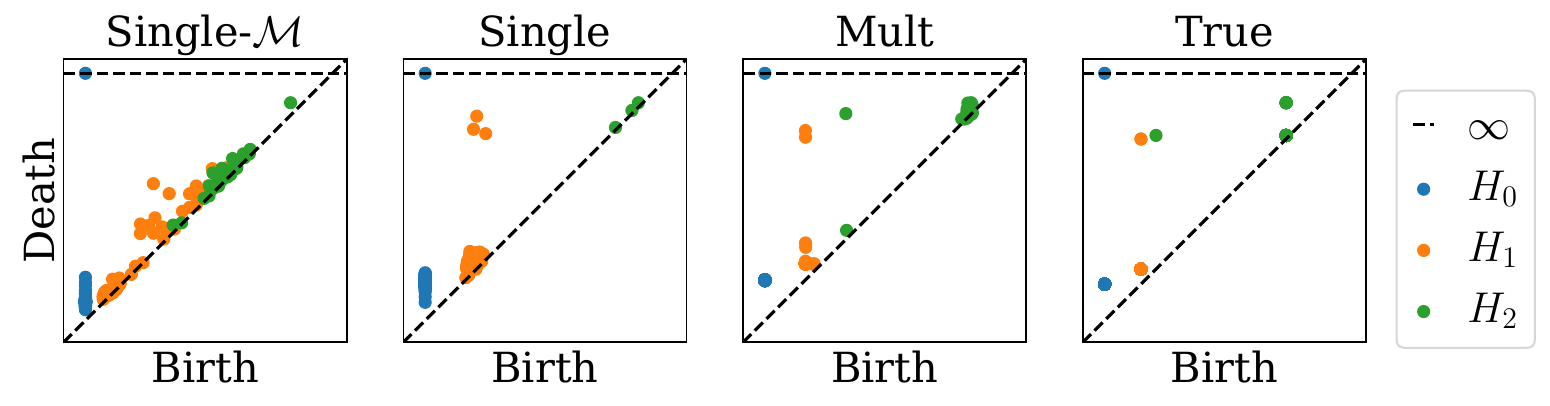}
    \caption{From top to bottom: example persistence diagrams induced by different models on Sphere-M, Torus-U and Torus-M.}
    \label{fig:supp_persistence_diagrams}
\end{figure}

\subsection{Results on Triangular Meshes}

We report the numerical results of \singlem, \single\ and \mult\ on triangular meshes in Table~\ref{tbl:mesh-single_m}, Table~\ref{tbl:mesh-single} and Table~\ref{tbl:mesh-mult}. We additionally report example persistence diagrams obtained under these settings in Figure~\ref{fig:supp_mesh_persistence_diagrams}. Due to the uniform nature of the evaluation points and their relatively small number, the diagrams may not that clearly reflect the underlying topology.

\begin{table*}
	\begin{center}
	\caption{Evaluation metrics of \singlem\ on triangular meshes in the form of mean $\pm$ std, lower is better.}
    \label{tbl:mesh-single_m}
		\begin{tabular}{llll}
			\toprule
			Manifold & Recons & $\W$ & Dists \\
			\midrule
			7-8 cube & $2.61 \!\cdot\! 10^{-4} \pm 2.53 \!\cdot\! 10^{-4}$ & $1.97 \!\cdot\! 10^{-1} \pm 1.81 \!\cdot\! 10^{-2}$ & $2.21 \!\cdot\! 10^{-1} \pm 1.57 \!\cdot\! 10^{-1}$ \\
			Busted & $4.97 \!\cdot\! 10^{-4} \pm 1.99 \!\cdot\! 10^{-4}$ & $1.97 \!\cdot\! 10^{-1} \pm 1.7 \!\cdot\! 10^{-2}$ & $5.95 \!\cdot\! 10^{-1} \pm 5.17 \!\cdot\! 10^{-1}$ \\
			Rock & $3.56 \!\cdot\! 10^{-4} \pm 1.49 \!\cdot\! 10^{-4}$ & $1.71 \!\cdot\! 10^{-1} \pm 1.57 \!\cdot\! 10^{-2}$ & $2.43 \!\cdot\! 10^{-1} \pm 1.23 \!\cdot\! 10^{-1}$ \\
			\bottomrule
		\end{tabular}
	\end{center}
\end{table*}

\begin{table*}
	\begin{center}
	\caption{Evaluation metrics of \single\ on triangular meshes in the form of mean $\pm$ std, lower is better.}
    \label{tbl:mesh-single}
		\begin{tabular}{llll}
			\toprule
			Manifold & Recons & $\W$ & Dists \\
			\midrule
			7-8 cube & $5.69 \!\cdot\! 10^{-3} \pm 1.16 \!\cdot\! 10^{-2}$ & $2.45 \!\cdot\! 10^{-1} \pm 6.65 \!\cdot\! 10^{-2}$ & $1.91 \!\cdot\! 10^{3} \pm 5.47 \!\cdot\! 10^{3}$ \\
			Busted & $4.2 \!\cdot\! 10^{-4} \pm 2.54 \!\cdot\! 10^{-4}$ & $2.06 \!\cdot\! 10^{-1} \pm 2.38 \!\cdot\! 10^{-2}$ & $3.15 \!\cdot\! 10^{-1} \pm 2.13 \!\cdot\! 10^{-1}$ \\
			Rock & $2.56 \!\cdot\! 10^{-4} \pm 1.31 \!\cdot\! 10^{-4}$ & $2.02 \!\cdot\! 10^{-1} \pm 5.65 \!\cdot\! 10^{-2}$ & $1.57 \!\cdot\! 10^{-1} \pm 1.5 \!\cdot\! 10^{-1}$ \\
			\bottomrule
		\end{tabular}
	\end{center}
\end{table*}

\begin{table*}
	\begin{center}
	\caption{Evaluation metrics of \mult\ on triangular meshes in the form of mean $\pm$ std, lower is better.}
    \label{tbl:mesh-mult}
		\begin{tabular}{llll}
			\toprule
			Manifold & Recons & $\W$ & Dists \\
			\midrule
			7-8 cube & $1.64 \!\cdot\! 10^{-5} \pm 3.37 \!\cdot\! 10^{-6}$ & $1.97 \!\cdot\! 10^{-1} \pm 2.08 \!\cdot\! 10^{-2}$ & $9.11 \!\cdot\! 10^{-2} \pm 7.41 \!\cdot\! 10^{-3}$ \\
			Busted & $5.63 \!\cdot\! 10^{-5} \pm 1.63 \!\cdot\! 10^{-5}$ & $2.1 \!\cdot\! 10^{-1} \pm 2.73 \!\cdot\! 10^{-2}$ & $4.76 \!\cdot\! 10^{-2} \pm 6.72 \!\cdot\! 10^{-3}$ \\
			Rock & $1.36 \!\cdot\! 10^{-5} \pm 2.94 \!\cdot\! 10^{-6}$ & $1.78 \!\cdot\! 10^{-1} \pm 2.7 \!\cdot\! 10^{-2}$ & $6.03 \!\cdot\! 10^{-2} \pm 1.99 \!\cdot\! 10^{-2}$ \\
			\bottomrule
		\end{tabular}
	\end{center}
\end{table*}

\begin{figure}
    \centering
    \includegraphics[width=0.7\linewidth]{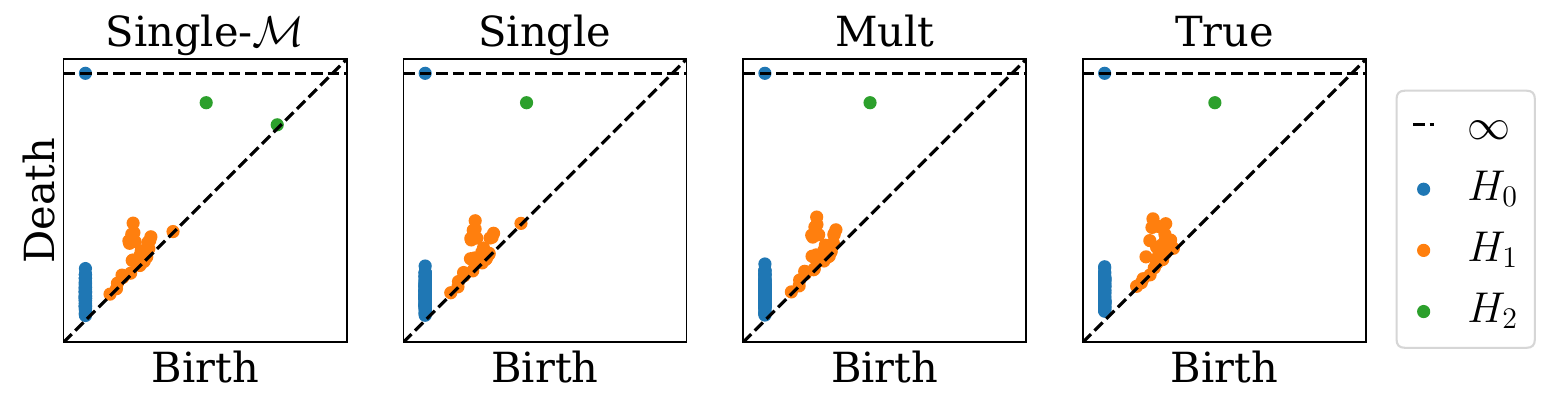} \\
    \includegraphics[width=0.7\linewidth]{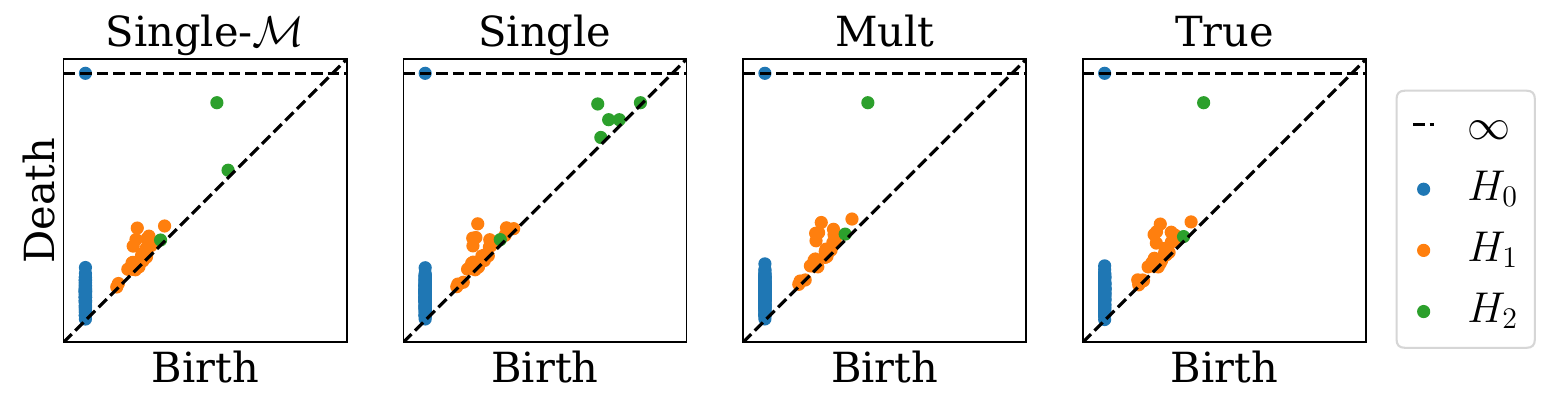} \\
    \includegraphics[width=0.7\linewidth]{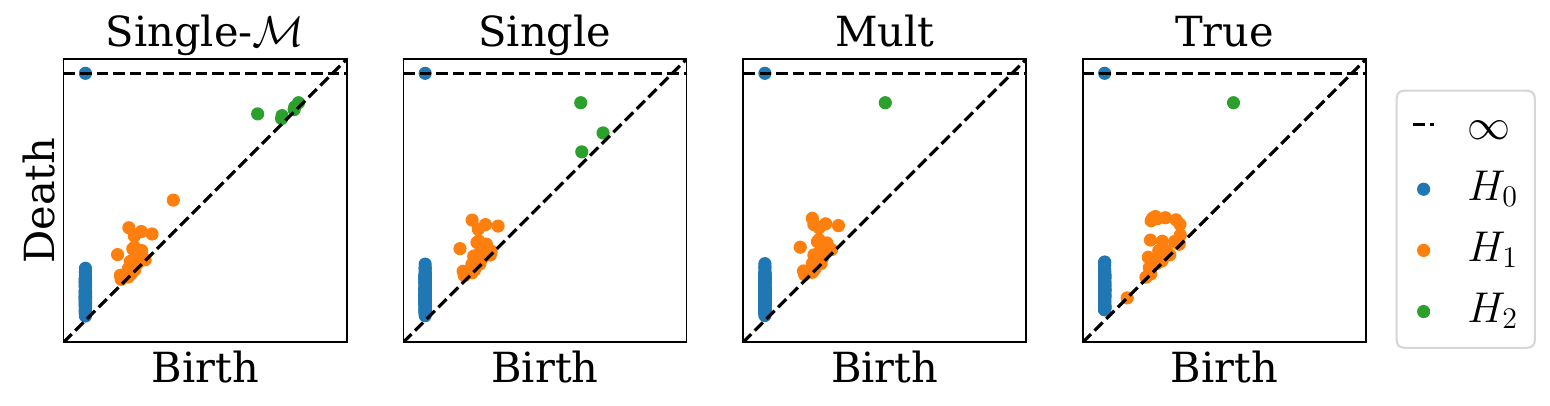}
    \caption{From top to bottom: example persistence diagrams induced by different models on 7-8th cube, Busted and Rock.}
    \label{fig:supp_mesh_persistence_diagrams}
\end{figure}

\subsection{Results on Mocap Data}

The numerical results of \singlem, \single\ and \mult\ on Mocap data are reported in Table~\ref{tbl:mocap-single_m}, Table~\ref{tbl:mocap-single} and Table~\ref{tbl:mocap-mult}, respectively.

\begin{table*}
	\begin{center}
	\caption{Evaluation metrics of \singlem\ on Mocap in the form of mean $\pm$ std, lower is better.}
    \label{tbl:mocap-single_m}
		\begin{tabular}{lll}
			\toprule
			Manifold & Recons & $\W$ \\
			\midrule
			Mocap1 & $5.5 \!\cdot\! 10^{-2} \pm 7.15 \!\cdot\! 10^{-2}$ & $8.05 \!\cdot\! 10^{-1} \pm 2.25 \!\cdot\! 10^{-1}$ \\
			Mocap3 & $3.76 \!\cdot\! 10^{0} \pm 2.13 \!\cdot\! 10^{0}$ & $2.87 \!\cdot\! 10^{0} \pm 2.39 \!\cdot\! 10^{-1}$ \\
			\bottomrule
		\end{tabular}
	\end{center}
\end{table*}

\begin{table*}
	\begin{center}
	\caption{Evaluation metrics of \single\ on Mocap in the form of mean $\pm$ std, lower is better.}
    \label{tbl:mocap-single}
		\begin{tabular}{lll}
			\toprule
			Manifold & Recons & $\W$ \\
			\midrule
			Mocap1 & $1.14 \!\cdot\! 10^{-2} \pm 1.28 \!\cdot\! 10^{-2}$ & $8.39 \!\cdot\! 10^{-1} \pm 2.44 \!\cdot\! 10^{-1}$ \\
			Mocap3 & $3.5 \!\cdot\! 10^{0} \pm 2.14 \!\cdot\! 10^{0}$ & $3.09 \!\cdot\! 10^{0} \pm 3.28 \!\cdot\! 10^{-1}$ \\
			\bottomrule
		\end{tabular}
	\end{center}
\end{table*}

\begin{table*}
	\begin{center}
	\caption{Evaluation metrics of \mult\ on Mocap in the form of mean $\pm$ std, lower is better.}
    \label{tbl:mocap-mult}
		\begin{tabular}{lll}
			\toprule
			Manifold & Recons & $\W$ \\
			\midrule
			Mocap1 & $6.79 \!\cdot\! 10^{-4} \pm 9.64 \!\cdot\! 10^{-5}$ & $7.3 \!\cdot\! 10^{-1} \pm 1.98 \!\cdot\! 10^{-1}$ \\
			Mocap3 & $3.97 \!\cdot\! 10^{-3} \pm 2.47 \!\cdot\! 10^{-4}$ & $2.47 \!\cdot\! 10^{0} \pm 7.08 \!\cdot\! 10^{-2}$ \\
			\bottomrule
		\end{tabular}
	\end{center}
\end{table*}

\subsection{Results on Stiefel Manifold}

We report results on an additional Riemannian manifold, the Stiefel manifold, which is the manifold formed by the set of all orthonormal $p$ frames in an $n$ dimensional space, based on Geomstats \citep{miolane2020geomstats_jmlr,miolane2024geomstats_software}. Note that the Lusternik-Schnirelmann category of Stiefel manifolds are generally known \citep{nishimoto2007ls_stiefel}, such that we have theoretical guarantees that using a smaller number of charts can cover the manifolds. For \mult, we always use $4$ charts. In the following we specify the settings, listing the intrinsic dimensionality $d$, the ambient dimensionality $D$, the number of layers in $\bg$ num\_$\bg$ and the number of layers in $\bh$ num\_$\bh$ in each chart:
\begin{enumerate}
\item $n=3, p=3$: $d=3$, $D=9$, \singlem\ and \single: num\_$\bg = 12$, num\_$\bh = 36$, \mult: num\_$\bg = 3$, num\_$\bh = 9$, 
\item $n=6, p=2$: $d=9$, $D=12$, \singlem\ and \single: num\_$\bg = 24$, num\_$\bh = 48$, \mult: num\_$\bg = 6$, num\_$\bh = 12$,
\item $n=6, p=3$: $d=12$, $D=18$, \singlem\ and \single: num\_$\bg = 24$, num\_$\bh = 48$, \mult: num\_$\bg = 6$, num\_$\bh = 12$.
\end{enumerate}

For the metric induced by ambient Euclidean space, the exponential maps are tractable while the logarithmic maps are not. We thus perform evaluation on the exponential maps, where we sample $320$ pairs of initial positions and initial velocities and solve $32$ of them in a batch. The initial positions are directly sampled from the data distribution, while the initial velocities are first sampled from the data distribution before projecting onto the tangent spaces corresponding to the initial positions. We observe that \single\ and \singlem\ can lead to challenging integration problems, and we thus set a running time limit of $36$ hours. Jobs that exceed the limit are considered failed and the overall result is considered bad.

We report the results in Figure~\ref{fig:stiefel_results}. The experimental results of \singlem, \single\ and \mult\ are shown in Table~\ref{tbl:stiefel-single_m}, Table~\ref{tbl:stiefel-single} and Table~\ref{tbl:stiefel-mult}, respectively. We observe that \mult\ is always the best in terms of reconstructions and yields good exponential map solutions, while being highly competitive and among the best for sample quality.

\begin{figure}
    \centering
    \includegraphics[width=0.7\linewidth]{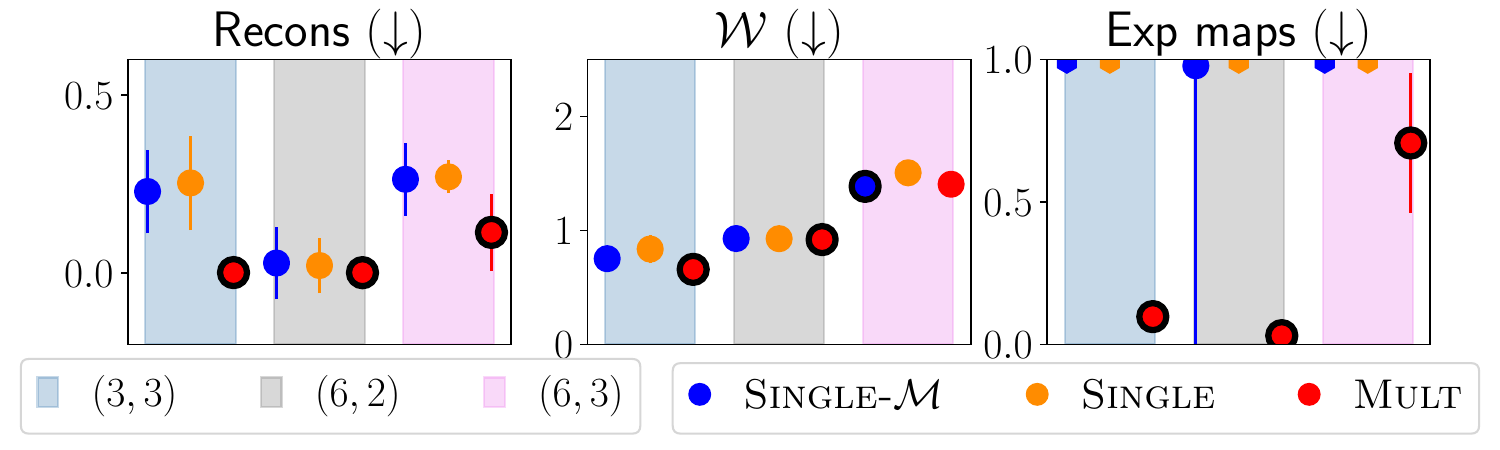}
    \caption{Evaluation metrics of different methods on Stiefel. \mult\ always yields the best reconstructions and overall the best exponential map solutions, while remaining highly competitive in terms of sample quality.}
    \label{fig:stiefel_results}
\end{figure}

\begin{table*}
	\begin{center}
	\caption{Evaluation metrics of \singlem\ on Stiefel in the form of mean $\pm$ std, lower is better. The ones with failed runs are italic.}
    \label{tbl:stiefel-single_m}
		\begin{tabular}{llll}
			\toprule
			$n, p$ & Recons & $\mathcal{W}$ & Exps \\
			\midrule
			n=3, p=3 & $2.29 \!\cdot\! 10^{-1} \pm 3.92 \!\cdot\! 10^{-2}$ & $7.53 \!\cdot\! 10^{-1} \pm 1.74 \!\cdot\! 10^{-2}$ & $\mathit{4.36 \!\cdot\! 10^{8}} \pm 3.32 \!\cdot\! 10^{8}$ \\
			n=6, p=2 & $2.87 \!\cdot\! 10^{-2} \pm 3.39 \!\cdot\! 10^{-2}$ & $9.29 \!\cdot\! 10^{-1} \pm 7.43 \!\cdot\! 10^{-3}$ & $9.78 \!\cdot\! 10^{-1} \pm 1.21 \!\cdot\! 10^{0}$ \\
			n=6, p=3 & $2.64 \!\cdot\! 10^{-1} \pm 3.39 \!\cdot\! 10^{-2}$ & $1.39 \!\cdot\! 10^{0} \pm 3.21 \!\cdot\! 10^{-3}$ & $\mathit{nan} \pm nan$ \\
			\bottomrule
		\end{tabular}
	\end{center}
\end{table*}

\begin{table*}
	\begin{center}
	\caption{Evaluation metrics of \single\ on Stiefel in the form of mean $\pm$ std, lower is better. The ones with failed runs are italic.}
    \label{tbl:stiefel-single}
		\begin{tabular}{llll}
			\toprule
			$n, p$ & Recons & $\mathcal{W}$ & Exps \\
			\midrule
			n=3, p=3 & $2.53 \!\cdot\! 10^{-1} \pm 4.41 \!\cdot\! 10^{-2}$ & $8.37 \!\cdot\! 10^{-1} \pm 4.17 \!\cdot\! 10^{-2}$ & $5.56 \!\cdot\! 10^{6} \pm 7.86 \!\cdot\! 10^{6}$ \\
			n=6, p=2 & $2.16 \!\cdot\! 10^{-2} \pm 2.59 \!\cdot\! 10^{-2}$ & $9.28 \!\cdot\! 10^{-1} \pm 2.14 \!\cdot\! 10^{-2}$ & $\mathit{2.29 \!\cdot\! 10^{-2}} \pm 3.19 \!\cdot\! 10^{-3}$ \\
			n=6, p=3 & $2.7 \!\cdot\! 10^{-1} \pm 1.53 \!\cdot\! 10^{-2}$ & $1.51 \!\cdot\! 10^{0} \pm 3.66 \!\cdot\! 10^{-2}$ & $\mathit{nan} \pm nan$ \\
			\bottomrule
		\end{tabular}
	\end{center}
\end{table*}

\begin{table*}
	\begin{center}
	\caption{Evaluation metrics of \mult\ on Stiefel in the form of mean $\pm$ std, lower is better.}
    \label{tbl:stiefel-mult}
		\begin{tabular}{llll}
			\toprule
			$n, p$ & Recons & $\mathcal{W}$ & Exps \\
			\midrule
			n=3, p=3 & $1.86 \!\cdot\! 10^{-3} \pm 4.72 \!\cdot\! 10^{-4}$ & $6.59 \!\cdot\! 10^{-1} \pm 3.05 \!\cdot\! 10^{-2}$ & $9.78 \!\cdot\! 10^{-2} \pm 3.33 \!\cdot\! 10^{-3}$ \\
			n=6, p=2 & $1.58 \!\cdot\! 10^{-3} \pm 3.38 \!\cdot\! 10^{-4}$ & $9.2 \!\cdot\! 10^{-1} \pm 6.65 \!\cdot\! 10^{-3}$ & $3.1 \!\cdot\! 10^{-2} \pm 9.22 \!\cdot\! 10^{-3}$ \\
			n=6, p=3 & $1.15 \!\cdot\! 10^{-1} \pm 3.58 \!\cdot\! 10^{-2}$ & $1.41 \!\cdot\! 10^{0} \pm 4.96 \!\cdot\! 10^{-3}$ & $7.07 \!\cdot\! 10^{-1} \pm 8.16 \!\cdot\! 10^{-2}$ \\
			\bottomrule
		\end{tabular}
	\end{center}
\end{table*}

\subsection{Varying Number of Charts}

In the main paper we generally consider the case where there is a single chart or there is a fixed number of charts. Here we perform some additional experiments on the effect of varying number of charts. We always consider the setting where the total number of layers is constant, such that using more charts imply using fewer numbers of layers for the individual flows.

We first consider Sphere-U, i.e. the uniform distribution over the two dimensional sphere embedded in three dimensional space. A single chart cannot cover the distribution, while two or more charts are enough. The results for the case where there are $4$ charts are directly taken from the main paper. The final result tables are shown in Table~\ref{tbl:varying1-0} and Table~\ref{tbl:varying1-1}. In general, the model's performances appear rather stable across the different numbers of charts.

\begin{table*}
	\begin{center}
	\caption{Evaluation metrics concerning modeling using varying number of charts on Sphere-U in the form of mean $\pm$ std, lower is better. The best are highlighted in bold.}
    \label{tbl:varying1-0}
		\begin{tabular}{lll}
			\toprule
			Num charts & Recons & $\mathcal{W}$ \\
			\midrule
			2 & $6.95 \!\cdot\! 10^{-6} \pm 1.66 \!\cdot\! 10^{-6}$ & $1.33 \!\cdot\! 10^{-1} \pm 1.23 \!\cdot\! 10^{-2}$ \\
			3 & $3.03 \!\cdot\! 10^{-6} \pm 9.82 \!\cdot\! 10^{-7}$ & $1.31 \!\cdot\! 10^{-1} \pm 1.28 \!\cdot\! 10^{-2}$ \\
			4 & $2.22 \!\cdot\! 10^{-6} \pm 7.44 \!\cdot\! 10^{-7}$ & $\mathbf{1.29 \!\cdot\! 10^{-1}} \pm 1.29 \!\cdot\! 10^{-2}$ \\
			6 & $\mathbf{1.18 \!\cdot\! 10^{-6}} \pm 3.46 \!\cdot\! 10^{-7}$ & $1.33 \!\cdot\! 10^{-1} \pm 1.53 \!\cdot\! 10^{-2}$ \\
			\bottomrule
		\end{tabular}
	\end{center}
\end{table*}

\begin{table*}
	\begin{center}
	\caption{Evaluation metrics concerning geometry using varying number of charts on Sphere-U in the form of mean $\pm$ std, lower is better. The best are highlighted in bold.}
    \label{tbl:varying1-1}
		\begin{tabular}{llll}
			\toprule
			Num charts & Exps & Logs & Dists \\
			\midrule
			2 & $3.11 \!\cdot\! 10^{-2} \pm 9.63 \!\cdot\! 10^{-3}$ & $8.59 \!\cdot\! 10^{-2} \pm 6.7 \!\cdot\! 10^{-3}$ & $1.79 \!\cdot\! 10^{-4} \pm 2.75 \!\cdot\! 10^{-5}$ \\
			3 & $\mathbf{1.66 \!\cdot\! 10^{-3}} \pm 9.32 \!\cdot\! 10^{-4}$ & $\mathbf{6.56 \!\cdot\! 10^{-2}} \pm 1.23 \!\cdot\! 10^{-2}$ & $\mathbf{1.58 \!\cdot\! 10^{-4}} \pm 2.99 \!\cdot\! 10^{-5}$ \\
			4 & $2.28 \!\cdot\! 10^{-2} \pm 1.52 \!\cdot\! 10^{-2}$ & $1.32 \!\cdot\! 10^{-1} \pm 2.86 \!\cdot\! 10^{-2}$ & $3.77 \!\cdot\! 10^{-4} \pm 1.27 \!\cdot\! 10^{-4}$ \\
			6 & $8.91 \!\cdot\! 10^{-3} \pm 1.14 \!\cdot\! 10^{-2}$ & $1.45 \!\cdot\! 10^{-1} \pm 3.63 \!\cdot\! 10^{-2}$ & $7.76 \!\cdot\! 10^{-4} \pm 2.45 \!\cdot\! 10^{-4}$ \\
			\bottomrule
		\end{tabular}
	\end{center}
\end{table*}

We next consider the case where the target distribution is a part of the uniform distribution over the two dimensional sphere in the three dimensional Euclidean space. In this case, a single chart is already enough. Nevertheless, we consider the case where we use two charts. Since the training objective is based on the entire mixture model, the individual flows are encouraged to cover different areas without being degenerate. As shown in Table~\ref{tbl:varying2}, using \mult\ results in highly competitive performances.

\begin{table*}
	\begin{center}
	\caption{Evaluation metrics on a part of Sphere-U in the form of mean $\pm$ std, lower is better. The best are highlighted in bold.}
    \label{tbl:varying2}
		\begin{tabular}{lll}
			\toprule
			Method & Recons & $\mathcal{W}$ \\
			\midrule
			\singlem & $\mathbf{1.04 \!\cdot\! 10^{-7}} \pm 3.88 \!\cdot\! 10^{-8}$ & $5.55 \!\cdot\! 10^{-2} \pm 8.1 \!\cdot\! 10^{-3}$ \\
			\single & $1.52 \!\cdot\! 10^{-6} \pm 4.14 \!\cdot\! 10^{-7}$ & $6.21 \!\cdot\! 10^{-2} \pm 8.49 \!\cdot\! 10^{-3}$ \\
			\mult & $1.35 \!\cdot\! 10^{-6} \pm 2.85 \!\cdot\! 10^{-7}$ & $\mathbf{5.41 \!\cdot\! 10^{-2}} \pm 6.64 \!\cdot\! 10^{-3}$ \\
			\bottomrule
		\end{tabular}
	\end{center}
\end{table*}

In general, as long as the number of charts exceeds what is required to cover the space, it is reasonable to expect that each chart can more easily cover a local region, where the flows cover approximately equal amount of masses. A such, as long as each individual flow has a sufficient number of layers to model part of the target distribution, we can expect the performance to remain stable. In fact, the empirical observation that the performances are stable across the different numbers of layers suggest that our method consistently learns the same underlying manifold, demonstrating stability and robustness.

\subsection{Consistency of the Charts}

Our training objective involves the reconstruction losses of the charts at the points where they are reasonably responsible. As such, it can be expected that the different charts yield rather similar reconstructions at the points where they are responsible for. We empirically verify this in Table~\ref{tbl:boundary_diffs}, where we measure the mean squared differences between every pair of reconstructions generated by two flows that have responsibilities over $0.1$ for each data point from the test dataset, reported on the sphere and torus datasets that were considered in the main paper. Note that in the main paper we compared the overall reconstructions of the flows with the ground truth, while here we are comparing the reconstructions generated by the individual flows against each other. We observe that the inconsistencies among the individual flows are generally small.

\begin{table*}
	\begin{center}
	\caption{Differences between reconstructions of different charts on boundaries in the form of mean $\pm$ std. The differences among the charts are generally small.}
    \label{tbl:boundary_diffs}
		\begin{tabular}{ll}
			\toprule
			Data & Diffs \\
			\midrule
			Sphere-U & $1.15 \!\cdot\! 10^{-5} \pm 5.8 \!\cdot\! 10^{-6}$ \\
			Torus-U & $6.04 \!\cdot\! 10^{-5} \pm 8.82 \!\cdot\! 10^{-6}$ \\
			Sphere-M & $1.41 \!\cdot\! 10^{-5} \pm 1.3 \!\cdot\! 10^{-6}$ \\
			Torus-M & $1.57 \!\cdot\! 10^{-4} \pm 3.16 \!\cdot\! 10^{-5}$ \\
			\bottomrule
		\end{tabular}
	\end{center}
\end{table*}

\subsection{Direct MLE Compared with EM}
\label{sec:app-em-mle-exps}

In general, direct MLE and EM achieves highly comparable performances. We report experimental results on sphere and torus across $3$ independent runs in terms of Recons, $\mathcal{W}$, Exps, Logs and Dists in Table~\ref{tbl:mle_em_recons}, Table~\ref{tbl:mle_em_recons}, Table~\ref{tbl:mle_em_exp}, Table~\ref{tbl:mle_em_log} and Table~\ref{tbl:mle_em_dist}.

\begin{table*}
	\begin{center}
	\caption{Reconstructions of MLE compared with EM across different data.}
	\label{tbl:mle_em_recons}
        \begin{tabular}{lllll}
			\toprule
			Alg & Sphere-U & Torus-U & Sphere-M & Torus-M \\
			\midrule
			MLE & $\mathbf{1.68 \!\cdot\! 10^{-6}} \pm 4.29 \!\cdot\! 10^{-7}$ & $4.65 \!\cdot\! 10^{-5} \pm 4.18 \!\cdot\! 10^{-5}$ & $1.71 \!\cdot\! 10^{-6} \pm 1.58 \!\cdot\! 10^{-7}$ & $\mathbf{5.73 \!\cdot\! 10^{-5}} \pm 2.62 \!\cdot\! 10^{-5}$ \\
			EM & $2.22 \!\cdot\! 10^{-6} \pm 7.44 \!\cdot\! 10^{-7}$ & $\mathbf{2.25 \!\cdot\! 10^{-5}} \pm 1.99 \!\cdot\! 10^{-6}$ & $\mathbf{1.58 \!\cdot\! 10^{-6}} \pm 2.42 \!\cdot\! 10^{-7}$ & $6.9 \!\cdot\! 10^{-5} \pm 2.96 \!\cdot\! 10^{-5}$ \\
			\bottomrule
		\end{tabular}
	\end{center}
\end{table*}

\begin{table*}
	\begin{center}
	\caption{Wasserstein distances of MLE compared with EM across different data.}
    \label{tbl:mle_em_w}
		\begin{tabular}{lllll}
			\toprule
			Alg & Sphere-U & Torus-U & Sphere-M & Torus-M \\
			\midrule
			MLE & $\mathbf{1.29 \!\cdot\! 10^{-1}} \pm 1.24 \!\cdot\! 10^{-2}$ & $3.26 \!\cdot\! 10^{-1} \pm 6.17 \!\cdot\! 10^{-2}$ & $\mathbf{1.34 \!\cdot\! 10^{-1}} \pm 1.62 \!\cdot\! 10^{-2}$ & $\mathbf{3.09 \!\cdot\! 10^{-1}} \pm 5.93 \!\cdot\! 10^{-2}$ \\
			EM & $1.29 \!\cdot\! 10^{-1} \pm 1.29 \!\cdot\! 10^{-2}$ & $\mathbf{2.46 \!\cdot\! 10^{-1}} \pm 2.55 \!\cdot\! 10^{-2}$ & $1.34 \!\cdot\! 10^{-1} \pm 1.66 \!\cdot\! 10^{-2}$ & $3.15 \!\cdot\! 10^{-1} \pm 5.93 \!\cdot\! 10^{-2}$ \\
			\bottomrule
		\end{tabular}
	\end{center}
\end{table*}

\begin{table*}
	\begin{center}
	\caption{Quality of exponential map solutions of MLE compared with EM across different data.}
    \label{tbl:mle_em_exp}
		\begin{tabular}{lllll}
			\toprule
			Alg & Sphere-U & Torus-U & Sphere-M & Torus-M \\
			\midrule
			MLE & $\mathbf{9.05 \!\cdot\! 10^{-3}} \pm 7.53 \!\cdot\! 10^{-3}$ & $1.07 \!\cdot\! 10^{0} \pm 5.73 \!\cdot\! 10^{-1}$ & $\mathbf{7.21 \!\cdot\! 10^{-3}} \pm 3.68 \!\cdot\! 10^{-3}$ & $\mathbf{1.29 \!\cdot\! 10^{0}} \pm 2.14 \!\cdot\! 10^{-1}$ \\
			EM & $2.28 \!\cdot\! 10^{-2} \pm 1.52 \!\cdot\! 10^{-2}$ & $\mathbf{1.17 \!\cdot\! 10^{-1}} \pm 3.99 \!\cdot\! 10^{-2}$ & $9.19 \!\cdot\! 10^{-3} \pm 5.65 \!\cdot\! 10^{-3}$ & $1.5 \!\cdot\! 10^{0} \pm 3.82 \!\cdot\! 10^{-1}$ \\
			\bottomrule
		\end{tabular}
	\end{center}
\end{table*}

\begin{table*}
	\begin{center}
	\caption{Quality of logarithmic map solutions of MLE compared with EM across different data.}
    \label{tbl:mle_em_log}
		\begin{tabular}{lllll}
			\toprule
			Alg & Sphere-U & Torus-U & Sphere-M & Torus-M \\
			\midrule
			MLE & $1.37 \!\cdot\! 10^{-1} \pm 2.59 \!\cdot\! 10^{-2}$ & $\mathbf{3.69 \!\cdot\! 10^{0}} \pm 1.76 \!\cdot\! 10^{0}$ & $\mathbf{1.55 \!\cdot\! 10^{-1}} \pm 4.08 \!\cdot\! 10^{-2}$ & $\mathbf{2.23 \!\cdot\! 10^{0}} \pm 4.87 \!\cdot\! 10^{-1}$ \\
			EM & $\mathbf{1.32 \!\cdot\! 10^{-1}} \pm 2.86 \!\cdot\! 10^{-2}$ & $6.41 \!\cdot\! 10^{0} \pm 1.02 \!\cdot\! 10^{0}$ & $1.67 \!\cdot\! 10^{-1} \pm 5.72 \!\cdot\! 10^{-2}$ & $2.32 \!\cdot\! 10^{0} \pm 6.07 \!\cdot\! 10^{-1}$ \\
			\bottomrule
		\end{tabular}
	\end{center}
\end{table*}

\begin{table*}
	\begin{center}
	\caption{Quality of distance solutions of MLE compared with EM across different data.}
    \label{tbl:mle_em_dist}
		\begin{tabular}{lllll}
			\toprule
			Alg & Sphere-U & Torus-U & Sphere-M & Torus-M \\
			\midrule
			MLE & $4.3 \!\cdot\! 10^{-4} \pm 1.61 \!\cdot\! 10^{-4}$ & $\mathbf{1.03 \!\cdot\! 10^{-1}} \pm 8.1 \!\cdot\! 10^{-2}$ & $8.72 \!\cdot\! 10^{-4} \pm 3.16 \!\cdot\! 10^{-4}$ & $5.18 \!\cdot\! 10^{-2} \pm 1.6 \!\cdot\! 10^{-2}$ \\
			EM & $\mathbf{3.77 \!\cdot\! 10^{-4}} \pm 1.27 \!\cdot\! 10^{-4}$ & $4.34 \!\cdot\! 10^{-1} \pm 1.38 \!\cdot\! 10^{-1}$ & $\mathbf{8.2 \!\cdot\! 10^{-4}} \pm 4.1 \!\cdot\! 10^{-4}$ & $\mathbf{5.13 \!\cdot\! 10^{-2}} \pm 1.71 \!\cdot\! 10^{-2}$ \\
			\bottomrule
		\end{tabular}
	\end{center}
\end{table*}

\subsection{Initialization Strategy for Geodesic Solver}

When solving for geodesics by optimizing for the curve, it is clear that the initialization strategy can make a difference. While in the main paper we directly use Stochman's default initialization strategy, below we consider an alternative where we attempt to achieve data-informed initializations. Specifically, we build a K-Nearest Neighbor graph and use the solution based on the graph to initialize the spline. We show the qualities of logarithmic map solutions in Table~\ref{tbl:graph_log} and qualities of distance solutions in Table~\ref{tbl:graph_dist}. Interestingly, the graph-based solution often yields worse results than the naive approach.

\begin{table*}
	\begin{center}
	\caption{Quality of logarithmic map solutions of graph-based initialization compared with vanilla across different data.}
    \label{tbl:graph_log}
		\begin{tabular}{lll}
			\toprule
			Data & Raw & Graph \\
			\midrule
			Sphere-U & $\mathbf{1.32 \!\cdot\! 10^{-1}} \pm 2.86 \!\cdot\! 10^{-2}$ & $1.6 \!\cdot\! 10^{-1} \pm 1.16 \!\cdot\! 10^{-2}$ \\
			Torus-U & $6.41 \!\cdot\! 10^{0} \pm 1.02 \!\cdot\! 10^{0}$ & $\mathbf{2.84 \!\cdot\! 10^{0}} \pm 9.2 \!\cdot\! 10^{-2}$ \\
			Sphere-M & $\mathbf{1.67 \!\cdot\! 10^{-1}} \pm 5.72 \!\cdot\! 10^{-2}$ & $7.86 \!\cdot\! 10^{-1} \pm 7.79 \!\cdot\! 10^{-2}$ \\
			Torus-M & $\mathbf{2.32 \!\cdot\! 10^{0}} \pm 6.07 \!\cdot\! 10^{-1}$ & $4.76 \!\cdot\! 10^{0} \pm 8.04 \!\cdot\! 10^{-1}$ \\
			\bottomrule
		\end{tabular}
	\end{center}
\end{table*}

\begin{table*}
	\begin{center}
	\caption{Quality of distance solutions of graph-based initialization compared with vanilla across different data.}
    \label{tbl:graph_dist}
		\begin{tabular}{lll}
			\toprule
			Data & Raw & Graph \\
			\midrule
			Sphere-U & $\mathbf{3.77 \!\cdot\! 10^{-4}} \pm 1.27 \!\cdot\! 10^{-4}$ & $3.4 \!\cdot\! 10^{-3} \pm 1.82 \!\cdot\! 10^{-4}$ \\
			Torus-U & $4.34 \!\cdot\! 10^{-1} \pm 1.38 \!\cdot\! 10^{-1}$ & $\mathbf{4.37 \!\cdot\! 10^{-2}} \pm 8.06 \!\cdot\! 10^{-3}$ \\
			Sphere-M & $\mathbf{8.2 \!\cdot\! 10^{-4}} \pm 4.1 \!\cdot\! 10^{-4}$ & $3.89 \!\cdot\! 10^{-2} \pm 5.9 \!\cdot\! 10^{-3}$ \\
			Torus-M & $\mathbf{5.13 \!\cdot\! 10^{-2}} \pm 1.71 \!\cdot\! 10^{-2}$ & $1.27 \!\cdot\! 10^{-1} \pm 5.16 \!\cdot\! 10^{-2}$ \\
			\bottomrule
		\end{tabular}
	\end{center}
\end{table*}

\subsection{Computational Time}

We report the running times of the different algorithms on sphere and torus across $3$ runs in Table~\ref{tbl:comp_times}. \mult often results in comparable running times as \single.

\begin{table*}
	\begin{center}
	\caption{Computational times of the different algorithms.}
    \label{tbl:comp_times}
		\begin{tabular}{llll}
			\toprule
			Method & Single-M & Single & Mult \\
			\midrule
			Sphere-U & [2637.04, 1967.51, 1537.88] & [8887.35, 8376.48, 3455.06] & [9924.72, 10077.32, 7130.9] \\
			Torus-U & [2972.0, 1033.06, 1417.95] & [5926.32, 3846.62, 8409.61] & [12012.15, 9163.91, 8925.49] \\
			Sphere-M & [1622.29, 2792.39, 2560.15] & [12085.48, 9273.44, 4849.75] & [9533.42, 10608.23, 8715.19] \\
			Torus-M & [2653.07, 1713.29, 2954.03] & [17894.75, 7220.11, 19395.33] & [12857.94, 15841.65, 11724.81] \\
			\bottomrule
		\end{tabular}
	\end{center}
\end{table*}

\subsection{Noise and Outliers}

We consider the scenario where the data is corrupted by noise and outliers. Specifically, we consider the uniform distribution on two dimensional sphere. We add independent Gaussian or Student-t noise with degrees of freedom $3$ to the both the training set and the validation set, while using noiseless test set. We show the results of \singlem, \single\ and \mult\ in terms of modeling and geometry in Table~\ref{tbl:noise_singlem_0}, Table~\ref{tbl:noise_single_0}, Table~\ref{tbl:noise_mult_0}, Table~\ref{tbl:noise_singlem_1}, Table~\ref{tbl:noise_single_1} and Table~\ref{tbl:noise_mult_1}, respectively. The models are still capable of learning the underlying geometry and topology with similar trends of performances, though naturally the performances degrade as the amount of noise and outliers increases.

\begin{table*}
	\begin{center}
	\caption{Evaluation metrics concerning modeling of \singlem\ on data with noise and outliers in the form of mean $\pm$ std, lower is better.}
    \label{tbl:noise_singlem_0}
		\begin{tabular}{lll}
			\toprule
			Data & Recons & $\W$ \\
			\midrule
			Raw & $8.61 \!\cdot\! 10^{-5} \pm 4.9 \!\cdot\! 10^{-5}$ & $1.29 \!\cdot\! 10^{-1} \pm 1.24 \!\cdot\! 10^{-2}$ \\
			N 0.01 & $2.26 \!\cdot\! 10^{-4} \pm 2.43 \!\cdot\! 10^{-4}$ & $1.32 \!\cdot\! 10^{-1} \pm 1.18 \!\cdot\! 10^{-2}$ \\
			N 0.03 & $1.67 \!\cdot\! 10^{-4} \pm 1.25 \!\cdot\! 10^{-4}$ & $1.3 \!\cdot\! 10^{-1} \pm 1.04 \!\cdot\! 10^{-2}$ \\
			N 0.1 & $3.45 \!\cdot\! 10^{-4} \pm 6.62 \!\cdot\! 10^{-5}$ & $1.37 \!\cdot\! 10^{-1} \pm 1.17 \!\cdot\! 10^{-2}$ \\
			T 0.01 & $1.18 \!\cdot\! 10^{-4} \pm 1.01 \!\cdot\! 10^{-4}$ & $1.34 \!\cdot\! 10^{-1} \pm 9.25 \!\cdot\! 10^{-3}$ \\
			T 0.03 & $3.27 \!\cdot\! 10^{-4} \pm 2.86 \!\cdot\! 10^{-4}$ & $1.33 \!\cdot\! 10^{-1} \pm 1.42 \!\cdot\! 10^{-2}$ \\
			T 0.1 & $4.6 \!\cdot\! 10^{-3} \pm 5.96 \!\cdot\! 10^{-3}$ & $1.45 \!\cdot\! 10^{-1} \pm 1.79 \!\cdot\! 10^{-2}$ \\
			\bottomrule
		\end{tabular}
	\end{center}
\end{table*}

\begin{table*}
	\begin{center}
	\caption{Evaluation metrics concerning modeling of \single\ on data with noise and outliers in the form of mean $\pm$ std, lower is better.}
    \label{tbl:noise_single_0}
		\begin{tabular}{lll}
			\toprule
			Data & Recons & $\W$ \\
			\midrule
			Raw & $6.81 \!\cdot\! 10^{-5} \pm 4.66 \!\cdot\! 10^{-5}$ & $1.35 \!\cdot\! 10^{-1} \pm 9.43 \!\cdot\! 10^{-3}$ \\
			N 0.01 & $1.8 \!\cdot\! 10^{-4} \pm 1.26 \!\cdot\! 10^{-4}$ & $6.51 \!\cdot\! 10^{1} \pm 2.43 \!\cdot\! 10^{2}$ \\
			N 0.03 & $1.4 \!\cdot\! 10^{-4} \pm 2.97 \!\cdot\! 10^{-5}$ & $1.38 \!\cdot\! 10^{-1} \pm 1.6 \!\cdot\! 10^{-2}$ \\
			N 0.1 & $6.83 \!\cdot\! 10^{-4} \pm 4.0 \!\cdot\! 10^{-4}$ & $1.47 \!\cdot\! 10^{-1} \pm 2.32 \!\cdot\! 10^{-2}$ \\
			T 0.01 & $1.8 \!\cdot\! 10^{-4} \pm 1.48 \!\cdot\! 10^{-4}$ & $1.34 \!\cdot\! 10^{-1} \pm 1.41 \!\cdot\! 10^{-2}$ \\
			T 0.03 & $1.85 \!\cdot\! 10^{-4} \pm 1.08 \!\cdot\! 10^{-4}$ & $1.36 \!\cdot\! 10^{-1} \pm 1.27 \!\cdot\! 10^{-2}$ \\
			T 0.1 & $1.11 \!\cdot\! 10^{-3} \pm 6.25 \!\cdot\! 10^{-4}$ & $1.44 \!\cdot\! 10^{-1} \pm 1.33 \!\cdot\! 10^{-2}$ \\
			\bottomrule
		\end{tabular}
	\end{center}
\end{table*}

\begin{table*}
	\begin{center}
	\caption{Evaluation metrics concerning modeling of \mult\ on data with noise and outliers in the form of mean $\pm$ std, lower is better.}
    \label{tbl:noise_mult_0}
		\begin{tabular}{lll}
			\toprule
			Data & Recons & $\W$ \\
			\midrule
			Raw & $2.22 \!\cdot\! 10^{-6} \pm 7.44 \!\cdot\! 10^{-7}$ & $1.29 \!\cdot\! 10^{-1} \pm 1.29 \!\cdot\! 10^{-2}$ \\
			N 0.01 & $8.4 \!\cdot\! 10^{-6} \pm 9.34 \!\cdot\! 10^{-7}$ & $1.36 \!\cdot\! 10^{-1} \pm 1.1 \!\cdot\! 10^{-2}$ \\
			N 0.03 & $1.42 \!\cdot\! 10^{-4} \pm 9.29 \!\cdot\! 10^{-6}$ & $1.32 \!\cdot\! 10^{-1} \pm 8.79 \!\cdot\! 10^{-3}$ \\
			N 0.1 & $8.55 \!\cdot\! 10^{-4} \pm 4.68 \!\cdot\! 10^{-5}$ & $1.62 \!\cdot\! 10^{-1} \pm 8.89 \!\cdot\! 10^{-3}$ \\
			T 0.01 & $8.1 \!\cdot\! 10^{-6} \pm 1.54 \!\cdot\! 10^{-6}$ & $1.32 \!\cdot\! 10^{-1} \pm 1.01 \!\cdot\! 10^{-2}$ \\
			T 0.03 & $1.14 \!\cdot\! 10^{-4} \pm 1.95 \!\cdot\! 10^{-5}$ & $1.39 \!\cdot\! 10^{-1} \pm 1.06 \!\cdot\! 10^{-2}$ \\
			T 0.1 & $6.97 \!\cdot\! 10^{-4} \pm 1.12 \!\cdot\! 10^{-4}$ & $1.81 \!\cdot\! 10^{-1} \pm 1.2 \!\cdot\! 10^{-2}$ \\
			\bottomrule
		\end{tabular}
	\end{center}
\end{table*}

\begin{table*}
	\begin{center}
	\caption{Evaluation metrics concerning geometry of \singlem\ on data with noise and outliers in the form of mean $\pm$ std, lower is better.}
    \label{tbl:noise_singlem_1}
		\begin{tabular}{llll}
			\toprule
			Data & Exps & Logs & Dists \\
			\midrule
			Raw & $1.86 \!\cdot\! 10^{3} \pm 2.62 \!\cdot\! 10^{3}$ & $8.58 \!\cdot\! 10^{-1} \pm 3.45 \!\cdot\! 10^{-1}$ & $2.35 \!\cdot\! 10^{-2} \pm 7.66 \!\cdot\! 10^{-3}$ \\
			N 0.01 & $4.94 \!\cdot\! 10^{16} \pm 6.99 \!\cdot\! 10^{16}$ & $1.28 \!\cdot\! 10^{0} \pm 2.84 \!\cdot\! 10^{-1}$ & $8.34 \!\cdot\! 10^{-2} \pm 6.0 \!\cdot\! 10^{-2}$ \\
			N 0.03 & $1.76 \!\cdot\! 10^{0} \pm 1.18 \!\cdot\! 10^{0}$ & $1.41 \!\cdot\! 10^{0} \pm 1.14 \!\cdot\! 10^{0}$ & $7.53 \!\cdot\! 10^{-2} \pm 9.07 \!\cdot\! 10^{-2}$ \\
			N 0.1 & $1.19 \!\cdot\! 10^{1} \pm 1.17 \!\cdot\! 10^{1}$ & $7.02 \!\cdot\! 10^{-1} \pm 9.09 \!\cdot\! 10^{-2}$ & $2.5 \!\cdot\! 10^{-2} \pm 9.8 \!\cdot\! 10^{-3}$ \\
			T 0.01 & $2.91 \!\cdot\! 10^{2} \pm 4.11 \!\cdot\! 10^{2}$ & $1.31 \!\cdot\! 10^{0} \pm 6.5 \!\cdot\! 10^{-1}$ & $1.07 \!\cdot\! 10^{-1} \pm 7.74 \!\cdot\! 10^{-2}$ \\
			T 0.03 & $2.0 \!\cdot\! 10^{18} \pm 2.83 \!\cdot\! 10^{18}$ & $9.31 \!\cdot\! 10^{-1} \pm 2.21 \!\cdot\! 10^{-1}$ & $4.4 \!\cdot\! 10^{-2} \pm 2.11 \!\cdot\! 10^{-2}$ \\
			T 0.1 & $4.9 \!\cdot\! 10^{-1} \pm 2.23 \!\cdot\! 10^{-1}$ & $1.16 \!\cdot\! 10^{0} \pm 4.37 \!\cdot\! 10^{-1}$ & $6.03 \!\cdot\! 10^{-2} \pm 3.76 \!\cdot\! 10^{-2}$ \\
			\bottomrule
		\end{tabular}
	\end{center}
\end{table*}

\begin{table*}
	\begin{center}
	\caption{Evaluation metrics concerning geometry of \single\ on data with noise and outliers in the form of mean $\pm$ std, lower is better.}
    \label{tbl:noise_single_1}
		\begin{tabular}{llll}
			\toprule
			Data & Exps & Logs & Dists \\
			\midrule
			Raw & $\mathit{1.91 \!\cdot\! 10^{24}} \pm 1.91 \!\cdot\! 10^{24}$ & $1.09 \!\cdot\! 10^{0} \pm 2.0 \!\cdot\! 10^{-1}$ & $5.22 \!\cdot\! 10^{-2} \pm 6.04 \!\cdot\! 10^{-3}$ \\
			N 0.01 & $1.43 \!\cdot\! 10^{13} \pm 2.02 \!\cdot\! 10^{13}$ & $7.86 \!\cdot\! 10^{-1} \pm 4.49 \!\cdot\! 10^{-2}$ & $3.75 \!\cdot\! 10^{-2} \pm 7.46 \!\cdot\! 10^{-3}$ \\
			N 0.03 & $7.96 \!\cdot\! 10^{0} \pm 1.06 \!\cdot\! 10^{1}$ & $6.63 \!\cdot\! 10^{-1} \pm 4.31 \!\cdot\! 10^{-2}$ & $1.89 \!\cdot\! 10^{-2} \pm 8.43 \!\cdot\! 10^{-3}$ \\
			N 0.1 & $1.69 \!\cdot\! 10^{1} \pm 2.37 \!\cdot\! 10^{1}$ & $7.72 \!\cdot\! 10^{-1} \pm 2.98 \!\cdot\! 10^{-1}$ & $2.82 \!\cdot\! 10^{-2} \pm 1.77 \!\cdot\! 10^{-2}$ \\
			T 0.01 & $1.51 \!\cdot\! 10^{9} \pm 2.09 \!\cdot\! 10^{9}$ & $6.9 \!\cdot\! 10^{-1} \pm 2.48 \!\cdot\! 10^{-1}$ & $2.58 \!\cdot\! 10^{-2} \pm 1.45 \!\cdot\! 10^{-2}$ \\
			T 0.03 & $2.95 \!\cdot\! 10^{2} \pm 4.18 \!\cdot\! 10^{2}$ & $8.95 \!\cdot\! 10^{-1} \pm 3.37 \!\cdot\! 10^{-1}$ & $5.87 \!\cdot\! 10^{-2} \pm 6.04 \!\cdot\! 10^{-2}$ \\
			T 0.1 & $7.95 \!\cdot\! 10^{1} \pm 1.12 \!\cdot\! 10^{2}$ & $1.03 \!\cdot\! 10^{0} \pm 7.82 \!\cdot\! 10^{-1}$ & $6.96 \!\cdot\! 10^{-2} \pm 8.21 \!\cdot\! 10^{-2}$ \\
			\bottomrule
		\end{tabular}
	\end{center}
\end{table*}

\begin{table*}
	\begin{center}
	\caption{Evaluation metrics concerning geometry of \mult\ on data with noise and outliers in the form of mean $\pm$ std, lower is better.}
    \label{tbl:noise_mult_1}
		\begin{tabular}{llll}
			\toprule
			Data & Exps & Logs & Dists \\
			\midrule
			Raw & $2.28 \!\cdot\! 10^{-2} \pm 1.52 \!\cdot\! 10^{-2}$ & $1.32 \!\cdot\! 10^{-1} \pm 2.86 \!\cdot\! 10^{-2}$ & $3.77 \!\cdot\! 10^{-4} \pm 1.27 \!\cdot\! 10^{-4}$ \\
			N 0.01 & $3.48 \!\cdot\! 10^{-2} \pm 3.55 \!\cdot\! 10^{-2}$ & $1.18 \!\cdot\! 10^{-1} \pm 3.17 \!\cdot\! 10^{-2}$ & $4.71 \!\cdot\! 10^{-4} \pm 3.5 \!\cdot\! 10^{-4}$ \\
			N 0.03 & $1.04 \!\cdot\! 10^{-1} \pm 1.42 \!\cdot\! 10^{-2}$ & $2.91 \!\cdot\! 10^{-1} \pm 2.59 \!\cdot\! 10^{-2}$ & $2.56 \!\cdot\! 10^{-3} \pm 4.33 \!\cdot\! 10^{-4}$ \\
			N 0.1 & $1.31 \!\cdot\! 10^{-1} \pm 1.42 \!\cdot\! 10^{-2}$ & $1.09 \!\cdot\! 10^{0} \pm 1.51 \!\cdot\! 10^{-2}$ & $1.58 \!\cdot\! 10^{-2} \pm 1.42 \!\cdot\! 10^{-3}$ \\
			T 0.01 & $3.9 \!\cdot\! 10^{-2} \pm 4.45 \!\cdot\! 10^{-2}$ & $1.09 \!\cdot\! 10^{-1} \pm 3.15 \!\cdot\! 10^{-2}$ & $4.07 \!\cdot\! 10^{-4} \pm 2.1 \!\cdot\! 10^{-4}$ \\
			T 0.03 & $1.91 \!\cdot\! 10^{-1} \pm 3.13 \!\cdot\! 10^{-2}$ & $3.09 \!\cdot\! 10^{-1} \pm 1.0 \!\cdot\! 10^{-2}$ & $1.88 \!\cdot\! 10^{-3} \pm 5.07 \!\cdot\! 10^{-4}$ \\
			T 0.1 & $2.43 \!\cdot\! 10^{-1} \pm 2.98 \!\cdot\! 10^{-2}$ & $1.26 \!\cdot\! 10^{0} \pm 3.25 \!\cdot\! 10^{-2}$ & $1.97 \!\cdot\! 10^{-2} \pm 2.7 \!\cdot\! 10^{-3}$ \\
			\bottomrule
		\end{tabular}
	\end{center}
\end{table*}

\section{PREVIOUS WORKS}

Previous works that study the estimation of geometric structures \citep{kruiff2024pullback,diepeveen2025score,sun2025geometry-autoencoder} typically employ a single chart. As such, pathological issues are bound to appear when the method is trained on a dataset that cannot be covered by a single chart. As an illustrative example, we consider \citet{diepeveen2025score}. We take their code and train a Riemannian autoencoder, using as data samples drawn from a circle in the two dimensional Euclidean space. A correct Riemannian autoencoder should identify the correct latent dimensionality, i.e. one. However, due to the topology mismatch, their method could not achieve this in practice: the learned variances for the two dimensions are around $0.589$ and $0.325$, respectively, which are still large.

\citet{rozo2025riemann2} employed a Bayesian approach, and considered the problem of learning the submanifolds of existing manifolds.

\end{document}